\documentclass[11pt,hidelinks]{article}

\usepackage[backend=biber,
            natbib=true,
            maxbibnames=99,
            minalphanames=3,
            uniquename=false,
            uniquelist=false,            
            maxcitenames=2,
            sorting=ynt,
            giveninits=true,                        
            labeldateparts=true,
            sortcites=true,
            backref=false]{biblatex}
\addbibresource{ref.bib}            
\setlength\bibitemsep{1.0\itemsep}
\DefineBibliographyStrings{english}{%
  backrefpage = {page},%
  backrefpages = {pages},%
}

\usepackage[left=1.in, top=1in, bottom=1in, right=1.in]{geometry}
\usepackage[us,12hr]{datetime} %
\usepackage{graphicx, subcaption}
\usepackage{multirow, makecell}
\usepackage[T1]{fontenc}
\usepackage{microtype}
\usepackage{sidecap}
\usepackage{amsmath,amsthm,amsfonts,amssymb}
\usepackage{macro}

\usepackage{graphicx}
\usepackage{booktabs} %
\usepackage{enumitem}
\usepackage{hyperref}
\usepackage{cleveref}
\usepackage{wrapfig}
\usepackage{breakcites}

\usepackage{relsize}
\usepackage{authblk}
\usepackage[symbol]{footmisc}
\usepackage{algorithm}
\usepackage{algpseudocode}

\title{Scalable Multi-Objective and Meta Reinforcement Learning via Gradient Estimation}
\author{Zhenshuo Zhang} 
\author{Minxuan Duan}
\author{Youran Ye}
\author{Hongyang R. Zhang}
\affil{
    Northeastern University, Boston, Massachusetts\\
    Email: \texttt{\{zhang.zhens, duan.mi, ye.you, ho.zhang\}@northeastern.edu}
}

\begin{document}

\maketitle
\begin{abstract}
We study the problem of efficiently estimating policies that simultaneously optimize multiple objectives in reinforcement learning (RL). Given $n$ objectives (or tasks), we seek the optimal partition of these objectives into $k \ll n$ groups, where each group comprises related objectives that can be trained together. This problem arises in applications such as robotics, control, and preference optimization in language models, where learning a single policy for all $n$ objectives is suboptimal as $n$ grows. We introduce a two-stage procedure---meta-training followed by fine-tuning---to address this problem. We first learn a meta-policy for all objectives using multitask learning. Then, we adapt the meta-policy to multiple randomly sampled subsets of objectives. The adaptation step leverages a first-order approximation property of well-trained policy networks, which is empirically verified to be accurate within a $2\%$ error margin across various RL environments. The resulting algorithm, PolicyGradEx, efficiently estimates an aggregate task-affinity score matrix given a policy evaluation algorithm. Based on the estimated affinity score matrix, we cluster the $n$ objectives into $k$ groups by maximizing the intra-cluster affinity scores. Experiments on three robotic control and the Meta-World benchmarks demonstrate that our approach outperforms state-of-the-art baselines by $16\%$ on average, while delivering up to $26\times$ faster speedup relative to performing full training to obtain the clusters. Ablation studies validate each component of our approach. For instance, compared with random grouping and gradient-similarity-based grouping, our loss-based clustering yields an improvement of $19\%$. Finally, we analyze the generalization error of policy networks by measuring the Hessian trace of the loss surface, which gives non-vacuous measures relative to the observed generalization errors.
\end{abstract}

\section{Introduction}

Reinforcement learning (RL) is a technique for sequential decision-making in interactive environments, enabling agents to learn from feedback signals.
A central challenge in RL is to develop agents that generalize across a wide range of tasks rather than solving a single task in isolation.
For instance, a robot needs to master multiple related skills that share an underlying structure~\cite{yu2020metaworld}.
Prior work has studied this problem for two closely related settings.
The first, multitask reinforcement learning, seeks to find a single policy that performs well across a given set of objectives (or tasks)~\cite{sodhani2021multi,joshi2025benchmarking}.
The second, meta-reinforcement learning, aims to learn a meta-initialization that can rapidly adapt to unseen tasks~\cite{finn2017model, rakelly2019efficient}.
In both settings, however, the computational cost of evaluating and adapting a shared policy over many competing objectives grows quickly with $n$.

\begin{figure*}
    \centering
    \includegraphics[width=0.9750\textwidth]{./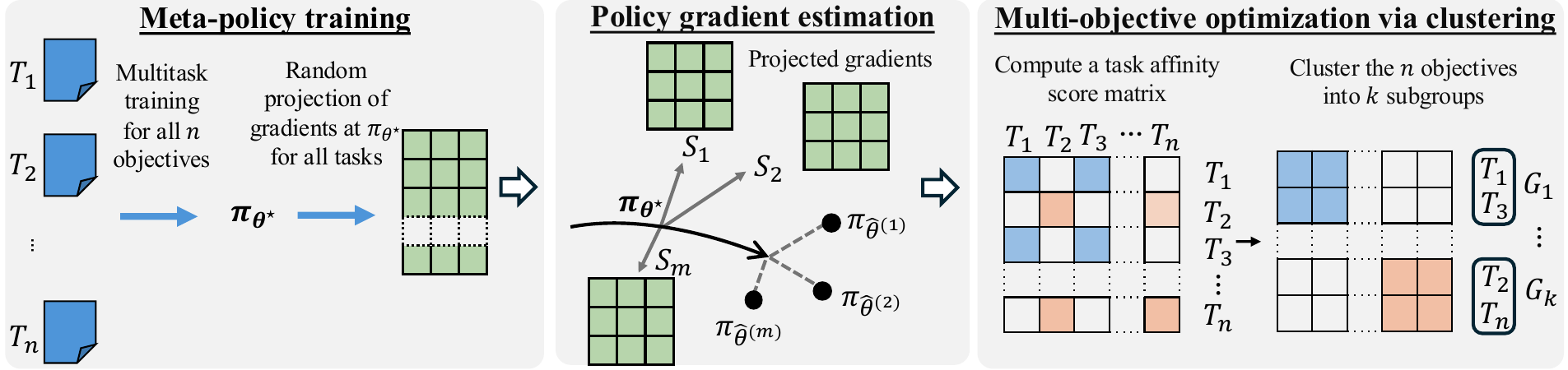}
    \caption{An overview of our approach. \emph{Left:} Run multitask training on all the tasks, $T_1, T_2, \dots, T_n$, and obtain to a meta-initialization policy $\pi_{\theta^\star}$. Store the projected gradients of a surrogate loss with meta-policy $\theta^\star$ for every transition from $t = 1, 2, \dots, N$. \emph{Middle:} Estimate policy adaptation performance on $m$ task subsets using projected gradients as features in logistic regression. \emph{Right:} Compute an $n\times n$ task affinity score matrix based on the estimated loss values. Lastly, run a clustering algorithm to group similar objectives, resulting in $k$ subgroups $G_1, \dots, G_k$, each of which share the same policy within group.}\label{fig_pipeline}
\end{figure*}

This paper studies the problem of designing a (scalable) multi-objective optimizer for reinforcement learning, with extensions to policy meta-adaptation.
Learning from a diverse set of tasks has been shown to improve generalization~\cite{thrun1998learning,cobbe2019quantifying, yin2019meta}.
However, the main difficulty is modeling task relationships and how different tasks transfer information to one another when trained together in a complex network \cite{wuunderstanding}.
Existing approaches have considered measuring the similarity between gradient vectors across tasks to quantify task relationships~\cite{yu2020pcgrad}.
Notably, this gradient-similarity-based measure corresponds to pairwise task-affinity scores, whereas in practice it is natural to model how multiple policies interact when trained together \cite{li2023identification}.
Exhaustively searching over all subsets of $\set{1,2,\dots,n}$ is computationally infeasible: evaluating each subset typically requires a full training procedure, and the number of subsets grows exponentially in $n$~\cite{caruana1997multitask}.
Greedy stepwise selection requires evaluation on $O(n^2)$ subsets, which is again impractical for large $n$ \cite{li2023boosting,li2024scalable}.

To overcome these challenges, we introduce a scalable algorithm for estimating policy performance on arbitrary task subsets without full training. Our algorithm starts by learning a single meta-policy across all tasks, then uses a first-order surrogate model—derived from a Taylor approximation around the meta-policy to efficiently estimate the outcome of fine-tuning on any subset. This approach is analogous to a random ensemble \cite{zhang2025linear}, which allows us to compute a pairwise task affinity matrix and partition the $n$ objectives into $k$ subgroups using a convex clustering procedure. The resulting groups can be trained separately using any multitask or meta-RL optimizers. This overall approach is illustrated in Figure~\ref{fig_pipeline}.

We validate our approach through extensive experiments on both Meta-World~\cite{yu2020metaworld} and robotic control benchmarks~\cite{towers2024gymnasium}.
We find that the first-order approximation applied to the outputs of a policy network yields accurate estimates---typically within $2\%$ error margin---while being up to $26\times$ faster than full training.
Our algorithm consistently outperforms multitask and meta-RL baselines~\cite{yang2020multi,sodhani2021multi,sun2022paco} by $16\%$ on average, and surpasses baseline grouping strategies (such as random grouping or gradient-similarity based grouping) by $23\%$ and $16\%$, respectively. These results highlight the effectiveness and scalability of our task-affinity estimation approach.

Lastly, we conduct a theoretical analysis of the generalization errors of policy networks. The main technical approach is to compute (and measure) the trace of the Hessian of the loss surface \cite{ju2022robust,ju2023generalization,zhangnoise}. See Theorem \ref{thm_hessian} for the precise statement. When evaluated on the above RL environments, we find that the Hessian-based generalization bounds match the scale of empirical generalization errors (see Figure \ref{fig_hessian_results}).

\sloppy
In summary, the contributions of this paper are threefold.
(1) We propose a first-order gradient estimation algorithm to estimate policy performance on any given task subset efficiently.
Based on this gradient estimation, we design a scalable algorithm that computes an $n\times n$ task affinity score matrix and partitions $n$ objectives into $k$ clusters via convex relaxation.
(2) We conduct extensive experiments to validate the efficiency of our approach for several multi-objective and meta reinforcement learning benchmarks. The code for reproducing our results can be accessed at: \url{https://github.com/VirtuosoResearch/PolicyGradEx}.
(3) Finally, we show non-vacuous bounds on the generalization error of the loss surface of policy networks via a PAC-Bayes analysis, and report empirical estimates of the Hessian trace.

\section{Problem Setup}

We consider a Markov decision process in which an agent must learn a set of $n$ tasks $\cT=\{T_1,T_2,\ldots,T_n\}$. Each task $T_i$ is modeled as a Markov decision process (MDP) $T_i=(\mathcal{S},\mathcal{A},P,P_0,r_i,\gamma),$
where all tasks share the same state space $\mathcal{S}$, action space $\mathcal{A}$, transition model $P$, and initial-state distribution $P_0$, discount factor $\gamma$, but differ in their reward functions $r_i$. For a policy $\pi_\theta$ with parameters $\theta$, we define its task-specific expected reward as
\[ R_i(\theta)= \mathbb{E}_{\pi_\theta}\!\left[\sum_{t=0}^{\infty}\gamma^{t}\, r_i(s_t,a_t)\right]. \]

In multi-objective RL, the goal is to learn a single policy that performs well across all tasks. A natural aggregate objective is the average reward
\[ R_{\cT}(\theta)=\frac{1}{n}\sum_{i=1}^{n} R_i(\theta). \]
More generally, for any subset $S\subseteq T$, we define
\[ R_S(\theta)=\frac{1}{|S|}\sum_{T_i\in S} R_i(\theta). \]

Pairwise task interactions can be characterized through this notion. For example, $R_{\{i,j\}}(\theta)$ captures the joint performance on tasks $T_i$ and $T_j$, providing a meaningful measure of their affinity. A key challenge in multitask learning is \emph{negative transfer}, where conflicting gradients from two tasks degrade optimization efficiency and final performance \cite{wuunderstanding}. This occurs when
\[ R_{\{i,j\}}(\theta) < \left( R_i(\theta) + R_j(\theta) \right) / 2, \]
indicating that training the pair together is detrimental compared to training each task separately.

Our goal is to model the underlying relationship between the $n$ tasks, which can be used to group similar tasks or select a small representative subset for meta-learning.
However, searching for the optimal partition requires evaluating $R_S(\theta)$ for multiple subsets $S\subseteq\cT$, since the number of possible subsets is $2^n$.
Greedy stepwise selection again requires evaluating $\mathcal{O}(n^2)$ subsets, where each evaluation typically involves a full RL training or adaptation procedure \cite{li2023boosting,li2024scalableft}. This is again intractable for large $n$.

To address this challenge, we introduce \emph{an efficient gradient-estimation algorithm to approximate the loss of any subset trained on a policy network without repeated training}. This enables us to scale up multi-objective RL to a large number of objectives $n$, as we will describe next.

\section{Our Approach}

This section describes our proposed algorithm for multi-objective reinforcement learning.
The algorithm consists of two main stages. We first find a meta-policy $\theta^\star$ trained using all tasks. Then, compute a surrogate model to estimate the adapted performance of $\theta^{\star}$ on a subset of $\set{1, 2, \dots, n}$. This surrogate modeling framework enables efficient estimation of subset combinations without repeated training.
In the second stage, we estimate the fine-tuning performance $R_{S_i}$ on $m$ randomly chosen subsets $S_1, S_2, \dots, S_m$ of $\cT$, for every $1\le i\le m$.
For each subset, we solve a weighted logistic regression problem to estimate the adaptation parameters $\hat{\theta}_{S_i}$ and then estimate an approximate loss.
Importantly, we run this second step using precomputed gradients and functional values at $\theta^{\star}$ from the first stage.
The key technique that enables this gradient estimation is \emph{a first-order approximation property of the policy reward} that we also empirically verify for various RL settings.

\subsection{Surrogate Modeling}

Our goal is to estimate the output of a policy update starting from the meta-initialization $\theta^\star$. To do so efficiently, we linearize the policy gradient objective around $\theta^\star$ and reformulate the one-step update as a weighted logistic regression problem.

We begin with the standard policy gradient objective used in algorithms such as Proximal Policy Optimization (PPO) \cite{schulman2017proximal}:
\[ J(\theta) = \mathbb{E}_t \left[ r_t(\theta) \hat{A}_t \right], \text{ where }
\displaystyle r_t(\theta) = \frac{\pi_{\theta}(a_t|s_t)}{\pi_{\theta^\star}(a_t | s_t)} \]
is the probability ratio and $\hat{A}_t$ is the advantage function for the state-action pair $(s_t, a_t)$ at time $t$.
We perform a first-order Taylor expansion of the log-probability $\log\pi_{\theta}(a_t|s_t)$ around the initial parameters $\theta^\star$. 
Let $\Delta \theta = \theta - \theta^\star$ and define the gradient feature vector \[ g_t = \nabla \log \pi_{\theta}(a_t | s_t)\big|_{\theta = \theta^\star}. \]
The first-order Taylor expansion of the above log-probability-ratio is given by:
\begin{align}
    \log r_t(\theta) &= \log \pi_{\theta}(a_t | s_t) - \log \pi_{\theta^\star}(a_t | s_t) \nonumber \\
    &= g_t^\top \Delta \theta + \epsilon, \label{eq_log_policy_approx}
\end{align}
where $\epsilon$ refers to the approximation error of the expansion.
By applying equation \eqref{eq_log_policy_approx} back to $J(\theta)$, we have:
\begin{align*}
    J(\theta) &\approx \mathbb{E}_t \left[ \hat{A}_t (1 + g_t^\top \Delta \theta) \right].
\end{align*}
Since $\mathbb{E}_t[\hat{A}_t]$ does not depend on $\Delta\theta$, maximizing $J(\theta)$ reduces to maximizing the following which is linear in $\theta$: \[ \mathbb{E}_t[\hat{A}_tg_t^\top (\theta - \theta^{\star})]. \]

\medskip
\noindent\textbf{Surrogate loss.} Next, we turn the above reward maximization on $J(\theta)$ into a weighted binary classification problem.
For every $(s_t, a_t, \hat{A}_t)$, for any $t \in \set{1, 2, \dots, N}$, define:
\begin{itemize}
    \item {Target binary label} $y_t = \text{sign}(\hat{A}_t) \in \{-1, +1\}$, indicating if the action was better or worse than zero.%
    \item {Classifier score} $z_t = g_t^\top \Delta \theta$, linear score for the update.
    \item {Sample weight} $w_t = |\hat{A}_t|$, magnitude of the reward. %
\end{itemize}
This yields a per-sample surrogate loss as follows:
\[\ell(g_t, y_t, w_t;\Delta\theta) = w_t \cdot \log(1 + (-y_t(g_t^\top\Delta\theta))).\]

For a given task subset ${S}$, let $\mathcal{D}_{S}$ denote the set of trajectory samples $(g_{i,t},y_{i,t},w_{i,t})$ from all the tasks $T_i \in {S}$, across all the samples (with an extra index $i$).
The average loss for the subset ${S}$ is defined by the average loss over all the samples from all the tasks within the subset $S$:
\begin{align}
    \hat{L}_S(\theta)&=\frac{1}{{|\mathcal{D}_{S}|}} \sum_{(g,y,w)\in\mathcal{D}_{S}} \ell(g,y,w;\theta). \label{eq_surrogate_model}
\end{align}
Minimizing this loss over $\Delta\theta = \theta - \theta^{\star}$ yields the estimated adaptation for the subset $S$ from the meta-policy $\theta^{\star}$.

\begin{algorithm}[t!]
\raggedright
\caption{Policy gradient estimation (\acronym{})}\label{alg_1}
\textbf{Input:} $m$ random subsets from $\mathcal{T}$, ${S}_1, S_2, \dots, {S}_m$ \\
\textbf{Require:} Initial meta-policy parameter $\theta^\star\in\real^p$; Number of episodes $N$; Projection dimension $d$
\begin{algorithmic}[1]
    \State $\pi_{\theta^\star} \leftarrow$ Setup the meta-policy for all tasks in $\mathcal{T}$
    \State ${S}_1, {S}_2, \dots, {S}_m \leftarrow m$ random subsets of $\mathcal{T}$ with size $\alpha$ 
    \State $P \leftarrow$ A $p$ by $d$ isotropic Gaussian random projection matrix
    \For{$t = 1,\dots, N$}
        \For{$T_i\in\mathcal{T}$}
            \State $g_{i,t} \leftarrow P^\top \nabla \pi_{\theta^\star}(a_t|s_t)$  %
            \State $w_{i,t} \leftarrow |\hat{A}_t|$
            \State $y_{i,t} \leftarrow \text{sign}(\hat{A}_t)$
        \EndFor
    \EndFor
    \For{$j=1,\dots,m$}
        \State $\Delta\hat{\theta}_d\leftarrow\arg\min_{\Delta\theta\in\real^d} \hat{L}_{{S}_j}(\theta^{\star} + P \Delta\theta)$ %
        \State $\hat{\theta}^{(j)}\leftarrow\theta^\star + P \Delta \hat{\theta}_d$
        \State $\hat{f}({S}_j)\leftarrow -\hat{L}_{{S}_j}(\hat{\theta}^{(j)})$ 
    \EndFor
    \State Return ${\hat{f}({S}_1), \dots, \hat f(S_m)}$
\end{algorithmic}
\end{algorithm}

\medskip
\noindent\textbf{Random projection.} Since the gradient vectors $g_{i,t}\in\mathbb{R}^p$ may be high-dimensional (with $p$ in the order of millions), we apply random projections to reduce the dimension down to a few hundred (in practice, $d = 400$ suffices).
This projection provably preserves the pairwise similarity between all pairs of gradient vectors via the Johnson–Lindenstrauss Lemma~\cite{johnson1984extensions}.
We sample a random projection matrix $P\in\mathbb{R}^{p\times d}$, where $d\ll p$ and every entry of $P$ is independently drawn from a Gaussian $\mathcal{N}(0, d^{-1})$.
We project the high-dimensional gradients to a $d$-dimensional space as \[ \tilde{g}_{i,t}=P^\top g_{i,t}. \]
Then, we solve a logistic regression problem in the $d$-dimensional space with $\theta_d \in \mathbb{R}^d$:
\begin{align}
\widehat{\theta}_d \leftarrow \arg\min_{\theta_d \in \mathbb{R}^d} \frac{1}{|\mathcal{D}_{S_j}|}\sum_{(g, y, w) \in \mathcal{D}_{S_j}} \ell(\tilde{g}, y, w; \theta_d),
\end{align}
for any $j = 1, 2, \dots, m$.
Finally, we turn the solution from dimension $d$ back to the original dimension $p$ as
\[ \widehat{\theta}^{(j)} = \theta^\star + P\widehat{\theta}_d. \] 
The complete description of this surrogate modeling procedure is provided in Algorithm~\ref{alg_1}.

\subsection{Evaluation of First-Order Policy Approximation}

A critical assumption is the accuracy of the approximation in equation \eqref{eq_log_policy_approx}.
Here we observe that the approximation error remains negligible for $\theta$ around $\theta^\star$.
We evaluate this approximation by measuring the relative residual sum of squares (RSS) error across multiple RL environments. Specifically, we compute:
\[ \frac{\left( \log \pi_{\theta}(a_t | s_t) -  \log \pi_{\theta^\star}(a_t | s_t) - g_t^\top \Delta\theta \right)^2}{(\log \pi_{\theta}(a_t | s_t))^2}, \]
where $\Delta \theta = \theta - \theta^\star$ and $g_t = \nabla \log \pi_{\theta}(a_t | s_t)\big|_{\theta = \theta^\star}$.

We begin by verifying that adapted policies remain close to the meta-initialization $\theta^\star$.
Using ten randomly sampled subsets, we evaluate the relative distance $\frac{\|\theta-\theta^\star\|_F}{\|\theta^\star\|_F}$, confirming that adaptation stays within a local neighborhood where the approximation is valid.

We report the RSS error measured on MT10 from the MetaWorld benchmark, CartPole, LunarLander, and Highway in Table~\ref{table_motivation_result}.
In these environments, the policy, a four-layer MLP, is trained to perform various control tasks by selecting an action based on its output given the states. We sample $2048$ steps per task to obtain the gradients.
The results are averaged over $10$ randomly sampled subsets of size $5$. For a proper initialization $\theta^\star$, the approximation error is less than $2\%$ when the updated policy $\theta$ remains close to the initial policy. 
This verifies that the first-order expansion provides a sufficiently accurate local model for surrogate-based policy estimation.
The approximation error increases from less than $2\%$ to up to $10\%$ when the policy's parameter distance from initialization grows to around $5\%$, defining the boundaries of our method's applicability.

\begin{table}[t!]
\centering
\caption{We empirically find that the approximation error of $\epsilon$ is negligible for several interactive RL environments. We report the mean and standard deviation of the approximation error across $10$ random subsets. We defer a detailed description of the setup to the experiments section.}\label{table_motivation_result}
{
\begin{tabular}{@{}lcccc@{}}
\toprule
{Distance} & {MT10} & {CartPole} & {Highway} & {Lunarlander}\\
\toprule
$0.1\%$ & $0.01_{\pm0.01}\%$ & $0.12_{\pm0.14}\%$ & $0.02_{\pm0.02}\%$ & $0.06_{\pm0.01}\%$  \\
$0.5\%$ & $0.43_{\pm0.73}\%$ & $0.73_{\pm0.10}\%$ & $0.11_{\pm0.09}\%$ & $0.03_{\pm0.02}\%$\\
$1.0\%$ & $0.32_{\pm0.56}\%$ & $0.98_{\pm0.65}\%$ & $2.04_{\pm0.58}\%$  & $0.48_{\pm0.01}\%$\\
\bottomrule
\end{tabular}}
\end{table}

\subsection{Task Affinity Grouping}

We next quantify the pairwise relationships between tasks by constructing a task affinity matrix $U\in\mathbb{R}^{n\times n}$, where each entry $U_{i,j}$ captures the collaborative behavior between tasks $T_i$ and $T_j$.
Larger values indicate that jointly fine-tuning on the two tasks produces higher surrogate performance.

A naive approach is to repeatedly train a model for all the $\binom{n}{2}$ subsets, which is impractical.
Instead, we sample $m$ random subsets $\{{S}_1,\dots,{S}_m\}$ and use the surrogate model to compute their estimated performance $\hat{f}({S_i})=-\hat{L}_{S_i}(\hat{\theta}_i)$, for all $i = 1, 2, \dots, m$ based on Algorithm \ref{alg_1}.
We then compute an $n\times n$ task affinity score matrix as follows:
\begin{align}
    U_{i,j} = \frac{1}{n_{i, j}} \sum_{1\leq l\leq m: \{T_i, T_j\}\in {S}_l} \hat{f}({S}_l), \label{eq_score}
\end{align}
for all $i$ and $j$ in $\set{1,2,\dots, n}$, where $n_{i,j}$ is the number of sampled subsets containing both $T_i$ and $T_j$.
As a remark, provided that $m = O(n^2)$, then $n_{i, j}$ must be nonzero with high probability, for all $i$ and $j$ in $\set{1, 2, \dots, n}$.

We then cluster tasks by solving a convex optimization problem that maximizes intra-cluster affinity, following a trace-regularized relaxation \cite{awasthi2015relax}. 
This step is very fast since it runs on an $n$ by $n$ matrix with $n$ being at most several hundred, which can typically be solved in just a few seconds.
Moreover, this procedure can be repeated with different values of $\lambda$ to determine the optimal number of clusters $k$ for downstream performance. 
For brevity, we defer a complete description of the convex relaxation program to the Appendix.
The whole procedure is described in Algorithm~\ref{alg_tag}.

\begin{algorithm}[t!]
\caption{Clustering related objectives into subgroups using the task affinity score matrix}\label{alg_tag}
\textbf{Input:} $n$ tasks $\cT$; number of desired clusters $k$ \\
\textbf{Require:} Number of subsets $m$ with size $\alpha$; Regularization parameter $\lambda$; Number of episodes $N$; Projected dimension $d$ \\
\textbf{Output:} A disjoint partition of $\set{1,2,\dots,n}$ into $k$ groups
\begin{algorithmic}[1]
    \State $\theta^\star \leftarrow$ Train a meta-initialization policy with $\mathcal{T}$
    \State ${S}_1, \dots, {S}_m \leftarrow$ Sample $m$ subsets of size $\alpha$ from $\mathcal{T}$
    \State $\hat{f}({S}_1), \hat f(S_2), \dots, \hat f(S_m) \leftarrow$ Apply \acronym{} with $(S_1, S_2, \dots, S_m; \theta^\star; N, d)$
    \State $U \leftarrow$ An $n \times n$ affinity score matrix via equation \eqref{eq_score}
    \State ${X} \leftarrow$ Solve a convex relaxation program \eqref{eq_trace_reg} with $U$ and regularization parameter $\lambda$
    \State $G_1, G_2, \dots, G_k \leftarrow$ Round ${X}$ into $k$ subgroups
\end{algorithmic}
\end{algorithm}

\section{Experiments}\label{sec_exp}

We conduct a comprehensive set of experiments to validate \acronym{} and its application across various multi-objective reinforcement learning benchmarks. Our evaluation is designed to answer two key questions: (1) Does our surrogate model accurately and efficiently approximate the outcome of full policy training on various task subsets?
(2) Do task groups identified by our algorithm lead to superior performance in downstream multitask and meta-RL evaluations compared to state-of-the-art and heuristic baselines?

Our finding shows that our surrogate modeling approach achieves high accuracy in estimating multitask RL performance, with over $0.73$ normalized mutual information relative to the ground-truth, while reducing the number of floating-point operations (FLOPs) by up to $26\times$.
In downstream evaluations, \acronym{} outperforms existing multitask optimizers by $19\%$ in multitask RL benchmarks and achieves a $13\%$ improvement in the meta-RL setting.
Furthermore, we compare our task affinity grouping algorithm to random grouping and gradient-similarity based grouping, showing that \acronym{} achieves a $19\%$ improvement over both.
Ablation analysis validates the use of random projections and the choice of $k$ in the algorithm.

\subsection{Experimental Setup}

\textit{Environments.}
We evaluate our method on two types of benchmarks.
First, we use MT10 from the Meta-World benchmark~\cite{yu2020metaworld}, which consists of 10 diverse robotic manipulation tasks. 
While these tasks share common state and action spaces, their reward functions and dynamics vary (e.g., a goal position in one task may correspond to an object's position in another).
Second, we use three classic control environments from Gymnasium~\cite{towers2024gymnasium} and MO-Gymnasium~\cite{felten_toolkit_2023}: CartPole, Highway, and LunarLander. For each, we generate a distribution of 10 source tasks by altering key physical parameters (e.g., pole length for CartPole, traffic density for Highway, and gravity for LunarLander). %

\textit{Baselines.}
We compare our method against two categories of baselines.
(1) Multi-objective RL baselines: These represent standard and state-of-the-art multitask approaches for multitasks training: a single policy trained on all tasks; Soft Modularization~\cite{yang2020multi}, which employs a routing-based selection mechanism; PaCo~\cite{sun2022paco}, an ensemble method that composes policies from a shared parameter subspace; and CARE~\cite{sodhani2021multi}, which performs inference using contextual information to adapt policies.
(2) Grouping baselines: These methods use the same number of groups as our approach but employ different heuristic grouping procedures. This allows us to isolate the benefit of our affinity modeling. We include random grouping (where tasks are randomly assigned to $k$ groups) and gradient-similarity-based grouping (where tasks are clustered based on the cosine similarity of their respective policy gradients).

\textit{Implementations.}
For the MT10 benchmark, we group the $n = 10$ tasks into $k = 3$ subsets and train a separate policy for each subgroup using soft modularization. We measure performance using the average success rate per task. 

For the control environments, we form the meta-training set by randomly selecting one from each task group provided by \acronym{}. We then use MAML~\cite{finn2017model} to train a meta-policy on the assigned tasks. 

The final performance is measured by the average reward after 200 adaptation steps on 50 unseen target tasks. 
All experiments are conducted using PyTorch on an Ubuntu server with an Intel Xeon E5-2623 CPU and an NVIDIA Quadro RTX 6000 GPU.

\subsection{Results for Loss Function Estimation}

\begin{table}[t!]
\caption{Normalized Mutual Information (NMI) between estimated and actual clusters, measured on two environments trained with a multi-layer perceptron (MLP). In the last column, the speedup is measured as the ratio of the FLOP count between full training and \acronym{}.}\label{table_subset_error}
\centering
{\begin{tabular}{c|cc|c}
\toprule
\# MLP layers           & Meta-World & LunarLander & Speedup \\ \hline
$2$ & $0.76$  &  $0.73$     & $21\times$ \\
$4$ & $0.76$  &  $0.73$     & $24\times$ \\
$8$ & $0.76$  &  $0.73$     & $26\times$ \\
\bottomrule
\end{tabular}}
\end{table}

We first evaluate both the approximation accuracy and computational cost of \acronym{}.
We establish ground-truth subset clusters based on task-affinity scores derived from rewards obtained by training policies on different task subsets. We then use Normalized Mutual Information (NMI) to quantify the accuracy of the subsets estimated by \acronym{} against this ground truth.

As shown in Table~\ref{table_subset_error}, on both  MetaWorld and LunarLander, \acronym{} identifies subset clusters that achieve over $0.73$ similarity to those obtained from full-policy training, while reducing FLOPs by a factor of $26\times$.
By contrast, the NMI under random clustering is approximately $0.2$.
Moreover, the relative error between the estimated task affinity matrix and the ground-truth matrix is at most $0.2$.

\subsection{Results for Multi-Objective Optimization}

We report the evaluation results in Table~\ref{tab_results}. 
On the Meta-World benchmark, our approach improves the average success rate by $21\%$ compared to multitask optimizers.
It also achieves a $62\%$ improvement over training with random groups and a $35\%$ improvement over training with gradient-similarity-based groups.
This observation suggests that random grouping and gradient similarity-based grouping in Meta-World tasks can result in negative transfer. In contrast, \acronym{} selects task groups with positive transferability, thereby improving the final success rate. 

\begin{table*}[t!]
\centering
\caption{Comparison of our approach with several baseline methods. We report the average success rate on the Meta-World benchmark and the rewards (after adaptation) for three robotic control environments. As for the meta-RL setting, we several a subset of tasks for which we set as the source transfer tasks. We note that CARE does not apply to the control tasks in the meta-RL setting. We report the mean and standard deviation from five runs.}\label{tab_results}
{\begin{tabular}{lcccc}
\toprule
RL Environment   & Meta-World & CartPole & Highway & LunarLander \\ \hline
Multi-task training \cite{yu2020metaworld}  &  $71.3_{\pm1.2}\%$ & $145.9_{\pm9.0}$ &  $140.0_{\pm4.6}$ & $53.8_{\pm14.6}$\\
Soft modularization \cite{yang2020multi} & $82.0_{\pm1.1}\%$ & $139.3_{\pm9.5}$ & $141.3_{\pm3.5}$ & $66.1_{\pm13.0}$\\
PaCo \cite{sun2022paco}         &  $73.1_{\pm1.1}\%$   & $144.5_{\pm5.2}$ & $136.6_{\pm6.4}$ &  $62.6_{\pm11.4}$  \\
CARE \cite{sodhani2021multi}         &  $84.0_{\pm1.8}\%$   &  /        &  /       &      /       \\ %
Randomly assign each task into $k$ groups &  $58.2_{\pm6.2}\%$ & $144.1_{\pm10.1}$ & $143.4_{\pm5.4}$ & $73.1_{\pm11.7}$ \\
Gradient-similarity-based grouping \cite{yu2020pcgrad} &  $69.6_{\pm1.9}\%$ & $142.0_{\pm8.2}$  & $135.6_{\pm7.6}$ & $80.8_{\pm6.9}$\\
\hline
\textbf{Algorithm \ref{alg_tag} (This paper)} & $\mathbf{94.0_{\pm2.8}}\%$   & $\mathbf{159.2}_{\pm3.8}$  & $\mathbf{153.5}_{\pm7.8}$ & $\mathbf{82.8}_{\pm6.9}$ \\
\bottomrule
\end{tabular}}
\end{table*}

Next, we evaluate the ability of \acronym{} to select a representative task subset for meta-learning in the three control RL environments. 
Using MAML as the meta-learner, our approach improves the final adapted reward by $7\%$ compared to multitask optimizers.

Finally, we compare with two baseline meta-training settings: training on all available source tasks and training on randomly grouped tasks.
Our approach achieves a $13\%$ improvement over both baselines and a $9\%$ improvement compared to gradient-similarity-based grouping.

\subsection{Generalization Error Measurements via Hessians}

Next, we analyze the generalization behavior.
by evaluating a sharpness measure based on the Hessian trace of the test loss
for the policy network and the policy gradient loss.
A smaller trace value suggests a flatter loss landscape.
We implement Hutchinson's estimator and leverage a faster version of this estimator \cite{meyer2021hutch++} to estimate the Hessian trace of RL models.
We compute the policy gradient loss from the target tasks and measure the difference between the training loss and test loss as the generalization error.
Results for the largest eigenvalue of the Hessian are similar.

First, we evaluate the Hessian trace and the policy generalization error between single-task training, multitask training on ten tasks, and training on a task group selected by \acronym{}. We begin training from the initial policy for $100$ iterations, each with $2048$ steps.

\begin{figure*}[t!]
    \begin{subfigure}[b]{0.6\textwidth}
        \centering
        \begin{subfigure}[b]{0.33\textwidth}
            \centering
            \includegraphics[width=0.99\textwidth]{./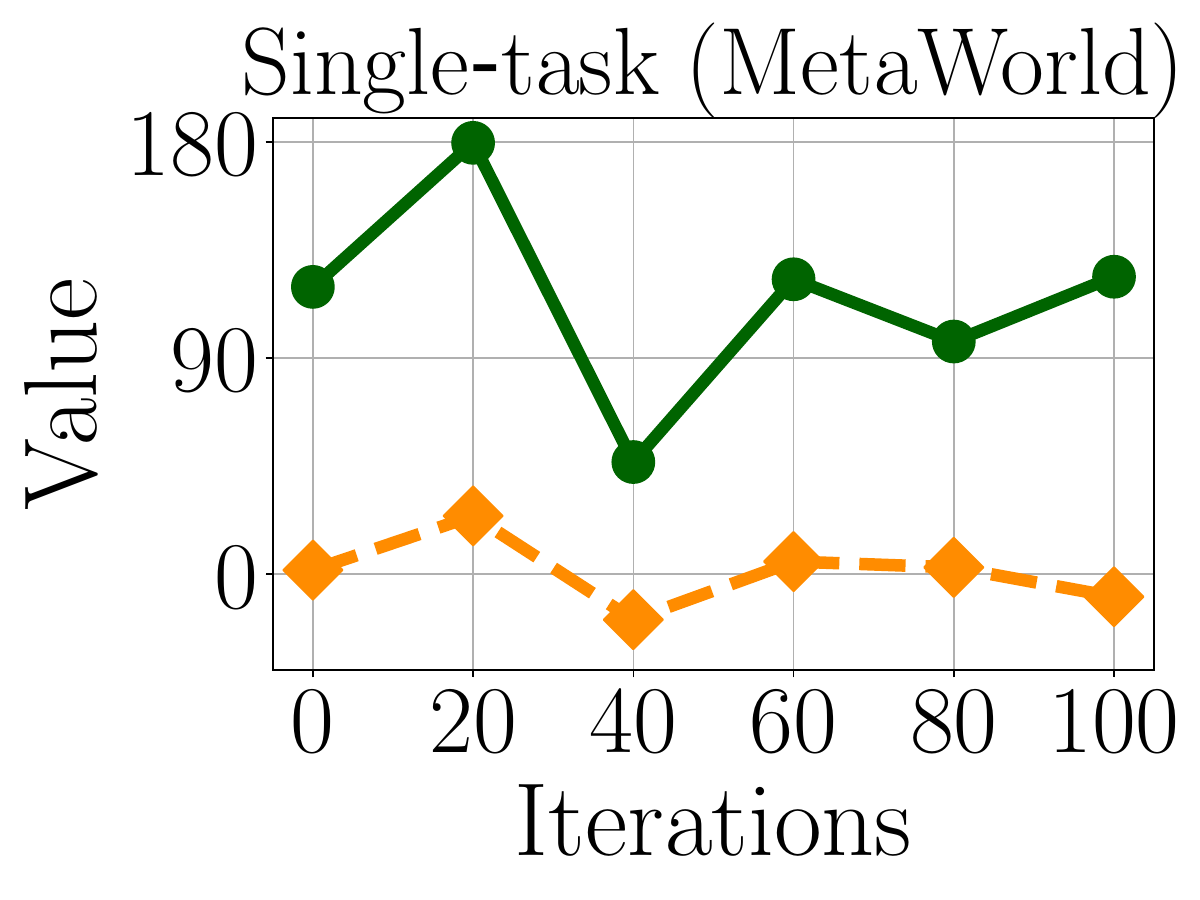}
        \end{subfigure}\hfill
        \begin{subfigure}[b]{0.33\textwidth}
            \centering
            \includegraphics[width=0.99\textwidth]{./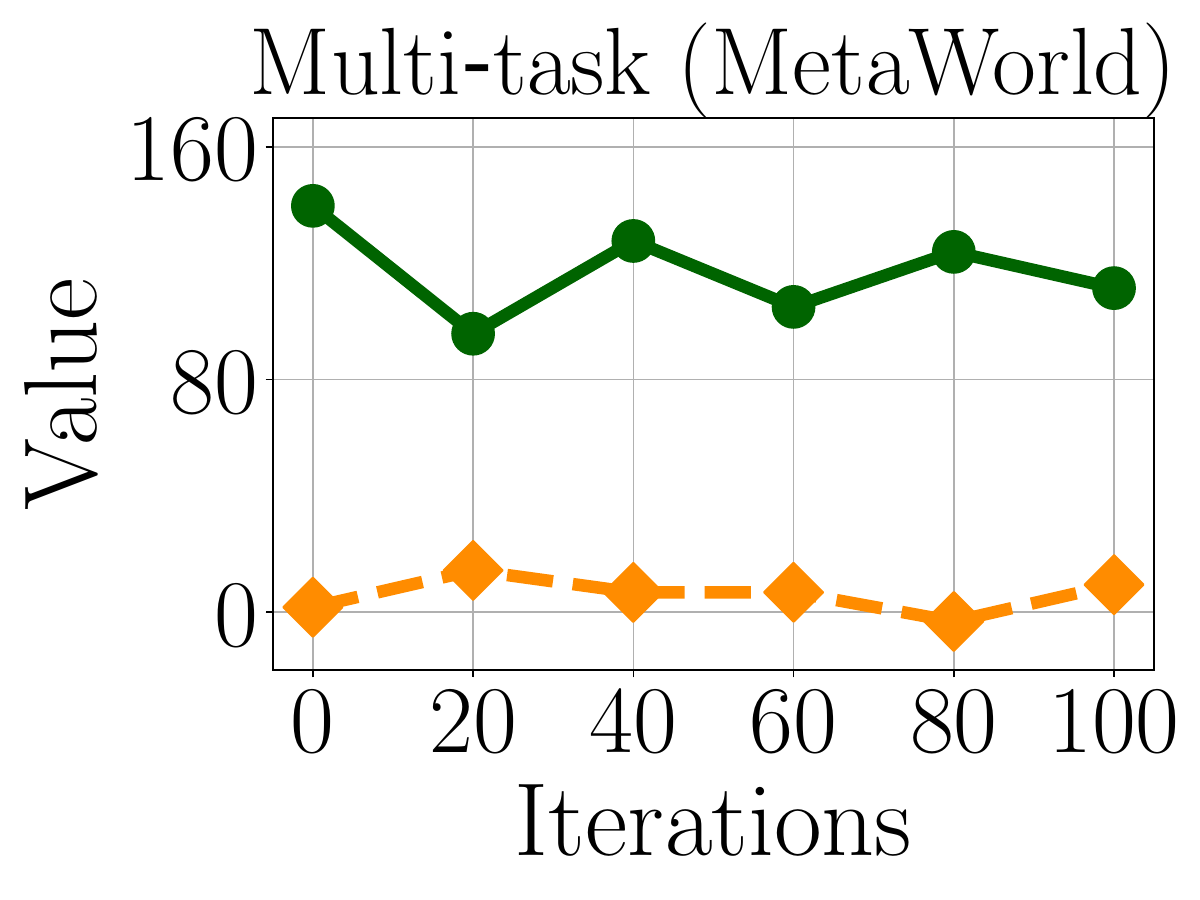}
        \end{subfigure}\hfill
        \begin{subfigure}[b]{0.33\textwidth}
            \centering
            \includegraphics[width=0.99\textwidth]{./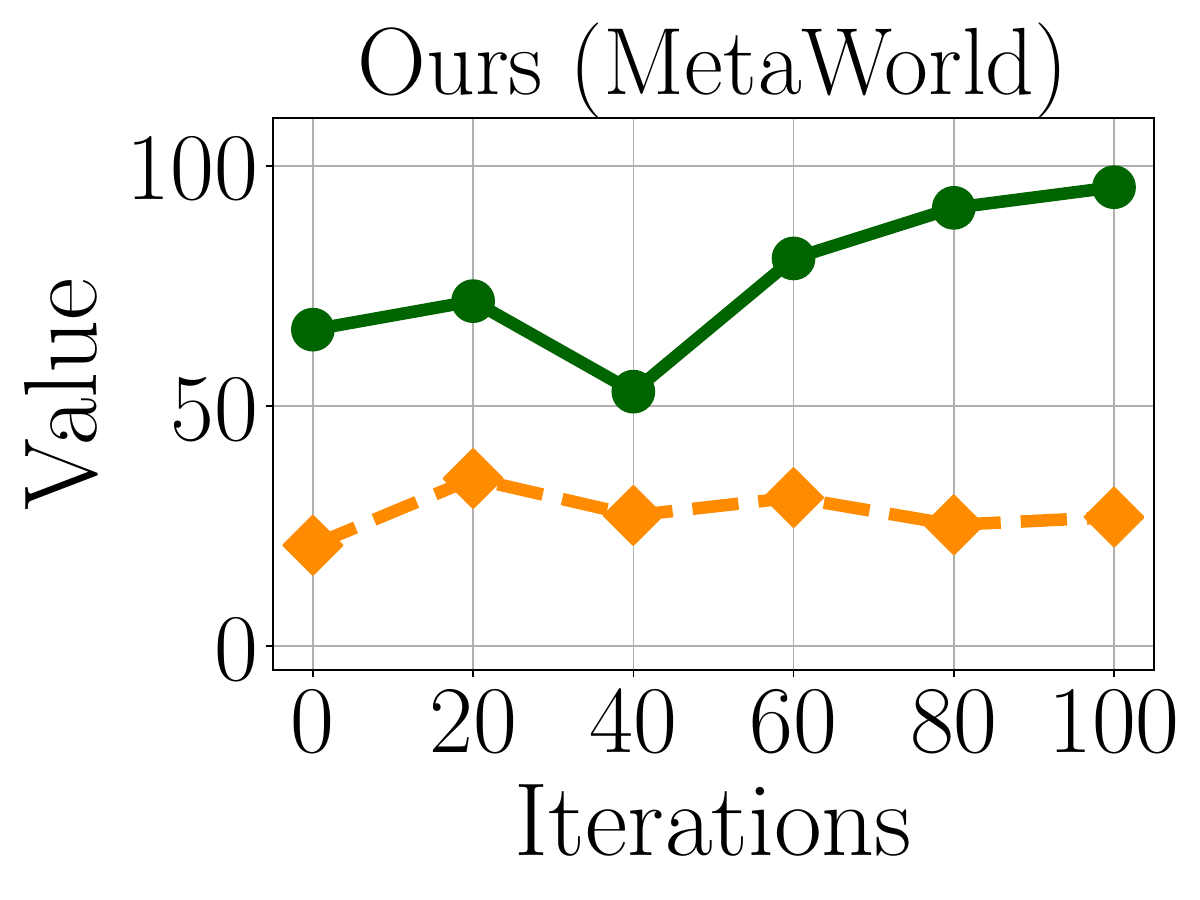}
        \end{subfigure}\hfill \\
        \begin{subfigure}[b]{0.33\textwidth}
            \centering
            \includegraphics[width=0.99\textwidth]{./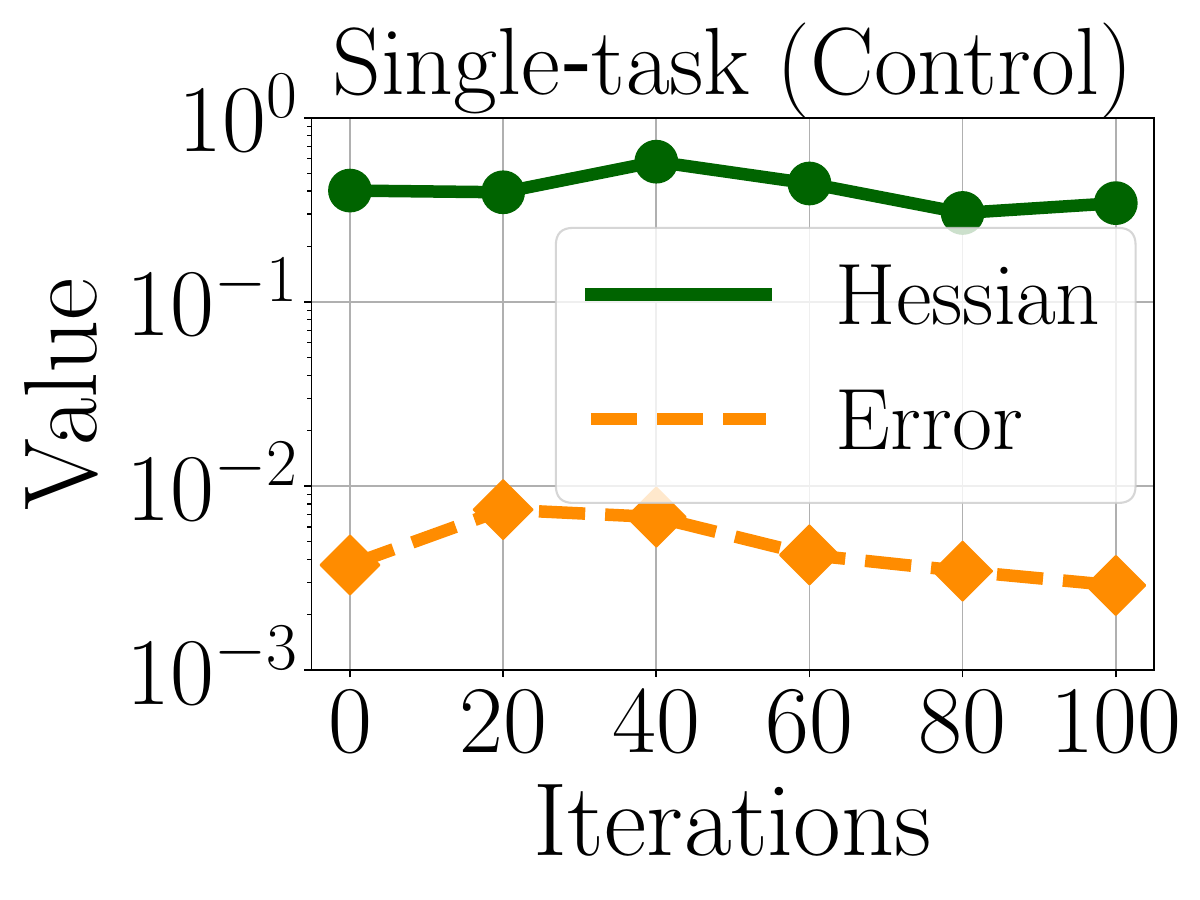}
        \end{subfigure}\hfill
        \begin{subfigure}[b]{0.33\textwidth}
            \centering
            \includegraphics[width=0.99\textwidth]{./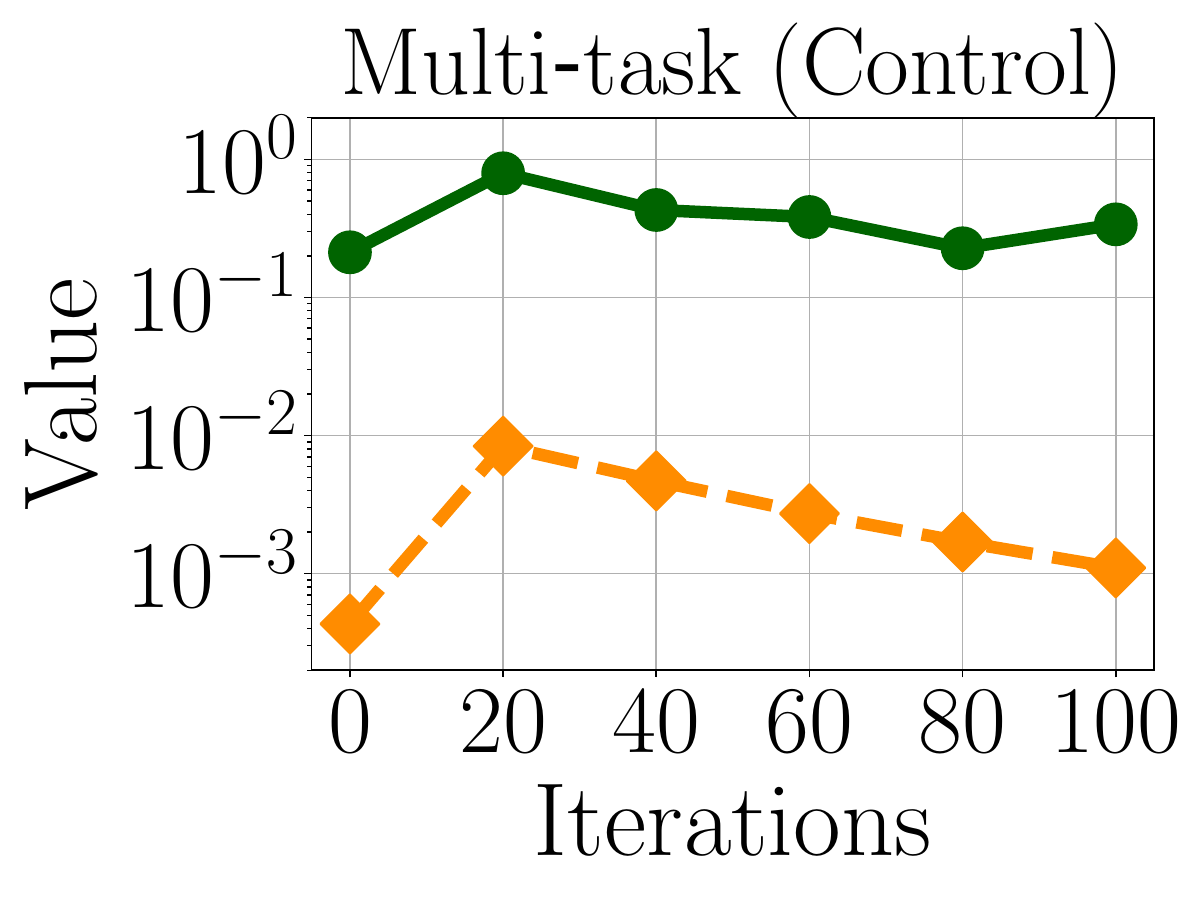}
        \end{subfigure}\hfill
        \begin{subfigure}[b]{0.33\textwidth}
            \centering
            \includegraphics[width=0.99\textwidth]{./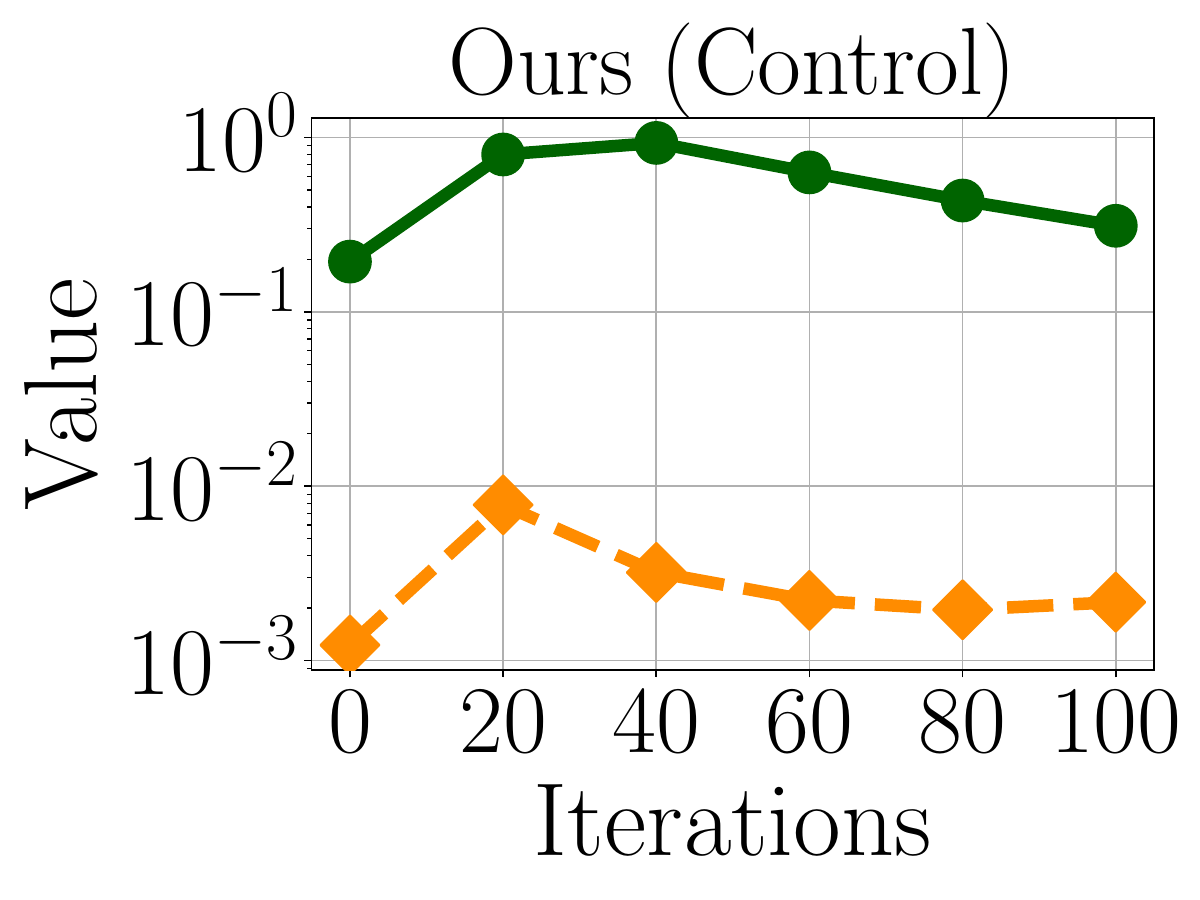}
        \end{subfigure}\hfill 
        \caption{Illustration of Hessian traces and generalization errors}\label{fig_hessian_training_curve}
    \end{subfigure}
    \begin{subfigure}[b]{0.39\textwidth}
        \begin{subfigure}[b]{0.49\textwidth}
            \centering
            \includegraphics[width=0.99\textwidth]{./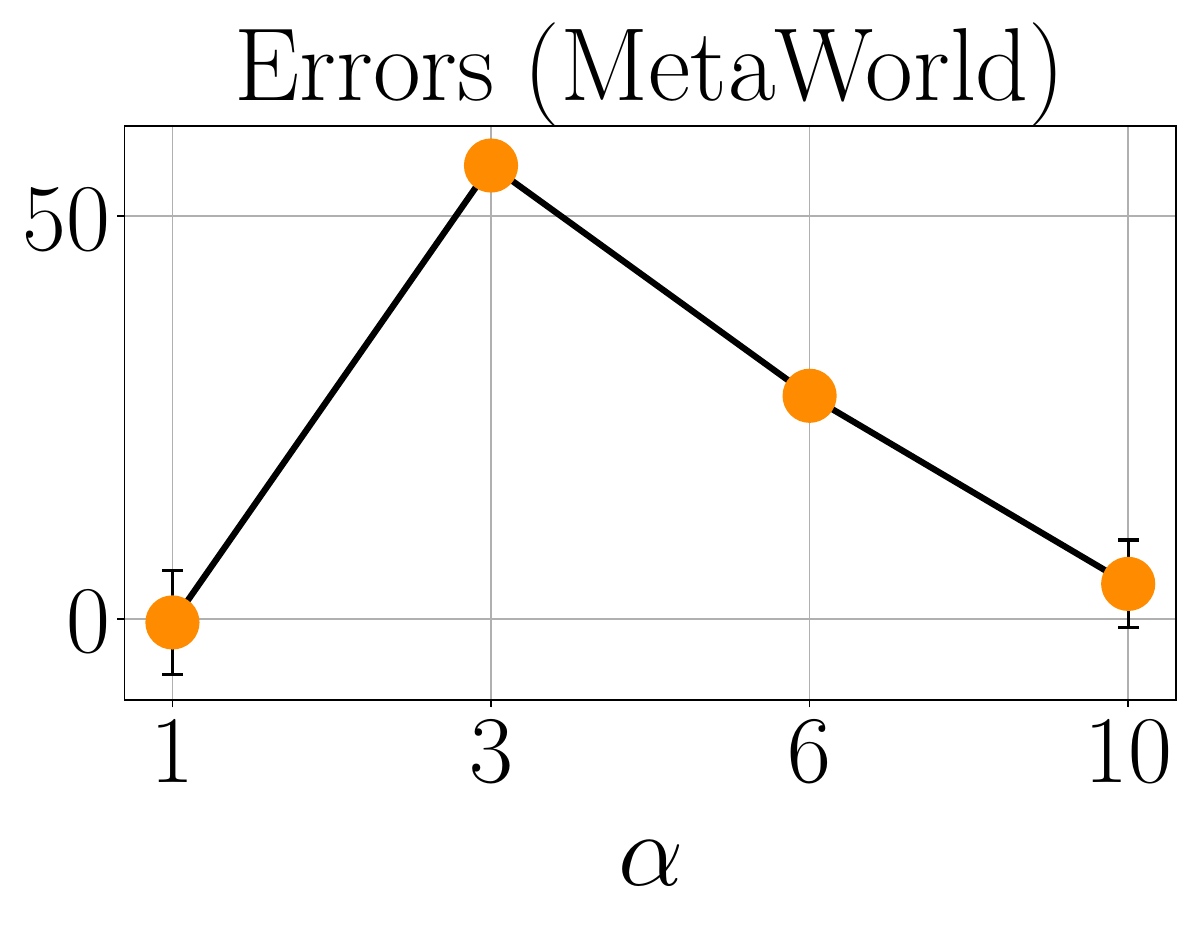}
        \end{subfigure}
        \begin{subfigure}[b]{0.49\textwidth}
            \centering
            \includegraphics[width=0.99\textwidth]{./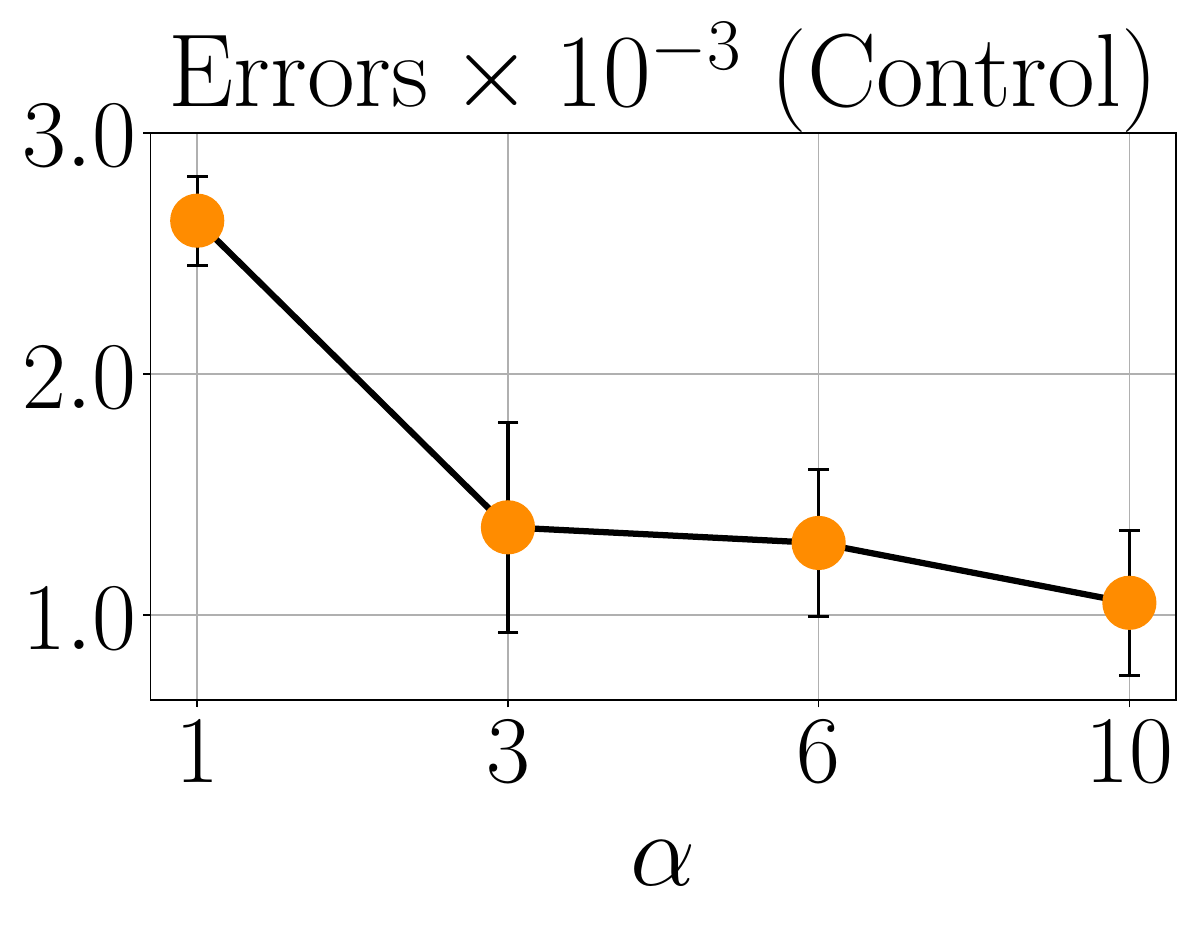}
        \end{subfigure} \\
        \begin{subfigure}[b]{0.49\textwidth}
            \centering
            \includegraphics[width=0.99\textwidth]{./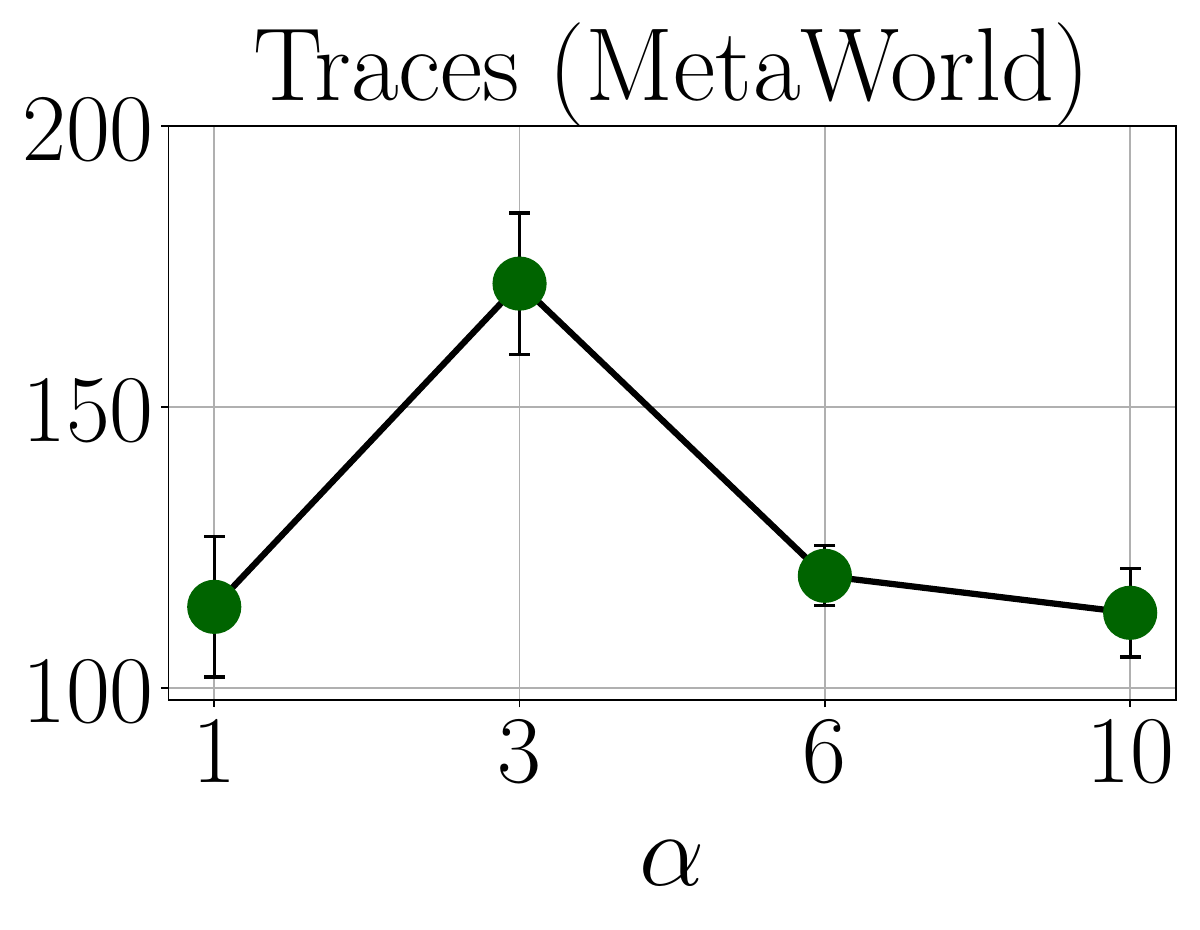}
        \end{subfigure}
        \begin{subfigure}[b]{0.49\textwidth}
            \centering
            \includegraphics[width=0.99\textwidth]{./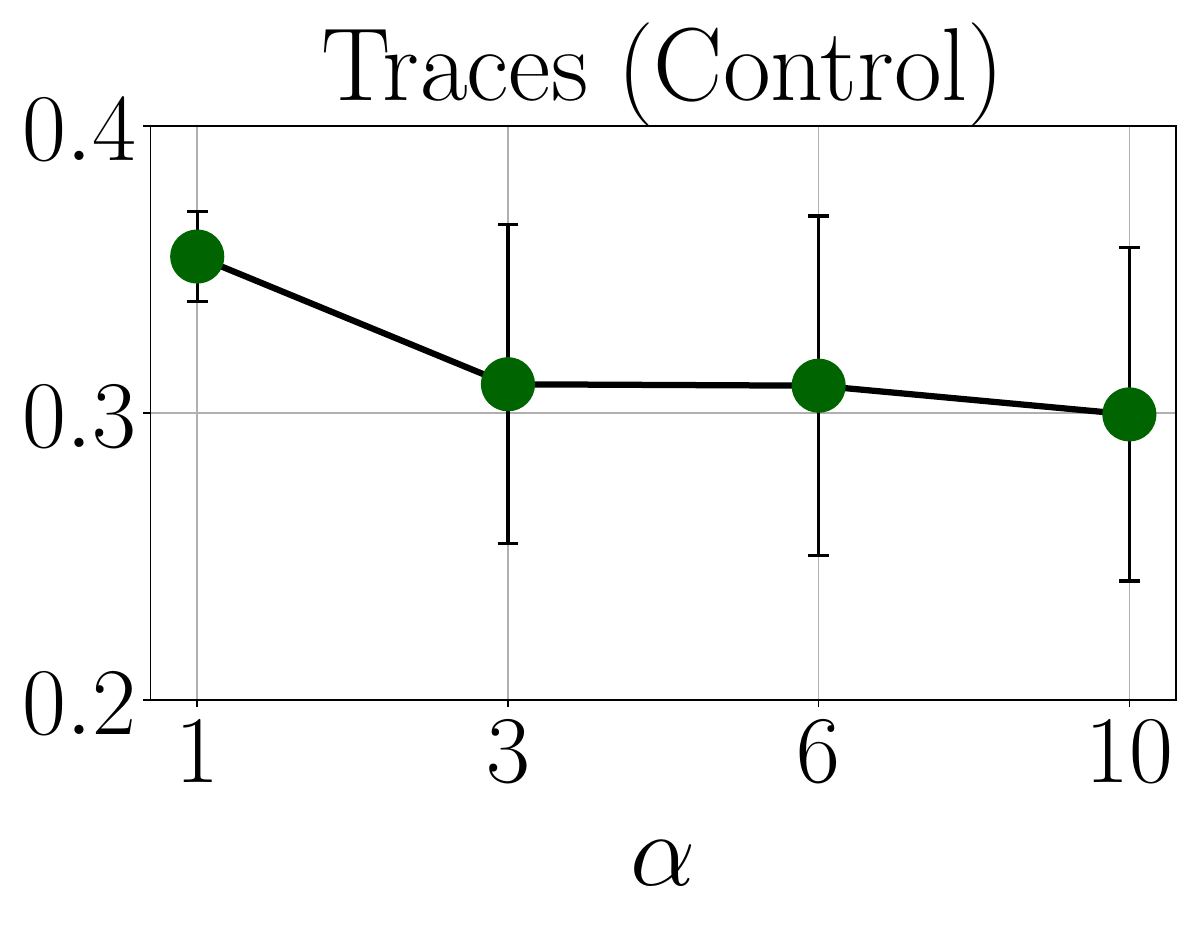}
        \end{subfigure}
        \caption{Varying subset size $\alpha$ in Algorithm \ref{alg_tag}}\label{fig_hessian_vary_k}
    \end{subfigure}
    \caption{Illustrating the Hessian trace measurements and empirical generalization errors with respect to the policy network. Figure \ref{fig_hessian_training_curve}: Showing that the Hessian trace is comparable in scale to the observed generalization errors, tested on Meta-World and a control task. Figure \ref{fig_hessian_vary_k}: Showing that in a Meta-World environment, the generalization error reaches the highest when the subset size $\alpha = 3$, suggesting negative transfer among a small set of tasks.
    In the meta-RL control task, the generalization performance monotonically improves with the addition of more tasks in each group.}\label{fig_hessian_results}
\end{figure*}

Figure~\ref{fig_hessian_training_curve} shows the dynamics of the trace of the Hessian and the generalization errors during training. 
We find that they both follow a qualitatively similar trend.
Next, as shown in Figure~\ref{fig_hessian_vary_k}, we find that single-task training yields a very low generalization error in the Meta-World benchmark, indicating that a single policy can solve an individual task. However, when the task count reaches three, the generalization error sharply increases, suggesting that a single policy struggles to perform well across multiple tasks. As the number of tasks increases, the generalization error and the Hessian trace decrease, indicating the positive transfer among tasks.
In the control environments, we observe that both the generalization error and the Hessian trace decrease as $\alpha$ increases.

\subsection{Ablation Analysis}
Given that each task is solvable in the Meta-World benchmark, our goal is to determine the minimum number of groups $k$ required to maximize the success rate. We vary $k$ from $1$ to $4$, observing average success rates of $94.0\%$ when $k=3$, compared to $89.5\%$ when $k=2$ and $95.1\%$ when $k=4$. Thus, we report the final results with three groups.
For the control environments, we vary $k$ from $1$ to $5$. We find that the success rate stabilizes after $k$ reaches $3$, so we report results using three groups.

For the random projection dimension $d$, we vary it from $200$ to $1000$ and observe that values beyond $400$ yield minimal gains, so we fix $d=400$.

\section{Generalization Error Analysis via Hessians}

In this section, we present a generalization error analysis for the RL setting described above.
We present a Hessian-based technique to quantify generalization errors, applicable to any kind of policy network used in RL training.
We first describe the tools that we will use.
Given a data distribution of $\cD$, let
\[ L(f_W) = \exarg{(x,y)\sim\cD}{\ell(f_W(x), y)} \]
denote the expected loss of an input sample $x, y$.
Let $\hat L(f_W)$ denote the empirical loss given $n$ independent samples $(x_1, y_1), (x_2, y_2), \dots, (x_n, y_n)$ sampled from $\cD$:
\[ \hat L(f_W) = \frac 1 n \sum_{i=1}^n {\ell(f_W(x_i), y_i)}. \]
Let $\cW\subseteq\real^d$ denote the weight space of $W$. %
Assume that the weighted Hessian along $x$ is bounded by a fixed constant $\cH$ that does not grow with $d$ or $n$:%
\begin{align*}
    \cH \define \sup_{W\in\cW} \left( W^{\top} \exarg{(x, y)\sim\cD}{[\nabla^2 \ell(f_W(x), y)]^+} W \right), %
\end{align*}
where $[\nabla^2 \ell(f_W(x), y)]^+$ means that we truncate the negative eigen-directions of the Hessian to zero.
Then, we have the following generalization error bound, which applies uniformly to the hypothesis space $\set{f_W: W\in\cW}$.

\begin{theorem}\label{thm_hessian}
    Assume that the loss function $\ell$ is bounded between $0$ and $C$ for a fixed constant $C > 0$ on the data distribution $\cD$.
    Suppose $\ell(f_W(\cdot), \cdot)$ is twice-differentiable in $W$ and the Hessian matrix $\nabla^2 \ell(f_W(\cdot), \cdot) $ is Lipschitz-continuous within the weight space $\cW$.
    Suppose there exists a fixed bound $\cH$ on the Hessian trace over any $W \in \cW$.
    Then, for any $W$ in $\cW$, any $\epsilon >0$ small enough, and any small $\delta > 0$, with probability at least $1 - \delta$ over the randomness of $n$ samples, we have:
    \begin{align}
        L(f_W) \le (1 + \epsilon) \hat L(f_W) + (1 + \epsilon) \sqrt{\frac {C \cdot\cH}{n}} 
        + \epsilon, \label{eq_main_1}
    \end{align}
    where $\epsilon = O(n^{- 3 / 4}\log(\delta^{-1}))$ denotes the error term.
\end{theorem}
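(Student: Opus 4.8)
The plan is to prove the bound by a PAC-Bayes argument in which the Hessian quadratic form $W^\top[\nabla^2\ell]^+W$ appears as the second-order cost of randomly perturbing the weights, and the $n^{-1/2}$ dependence comes from balancing that cost against the smoothing scale of the perturbation. I would fix a data-independent Gaussian prior $\mathcal{P}$ on $\cW$ and a posterior $\mathcal{Q}$ centered at the trained weights $W$, and instantiate a relative (fast-rate) PAC-Bayes inequality for the posterior-averaged risks $L^{\mathcal{Q}}=\mathbb{E}_{W'\sim\mathcal{Q}}[L(f_{W'})]$ and $\hat L^{\mathcal{Q}}=\mathbb{E}_{W'\sim\mathcal{Q}}[\hat L(f_{W'})]$. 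The deterministic quantities $L(f_W)$ and $\hat L(f_W)$ in the statement are then recovered from these averages by a second-order Taylor expansion around $W$, with the curvature term bounded by $\cH$.

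The steps, in order, are as follows. First, using twice-differentiability, expand each averaged risk to second order,
\[ \mathbb{E}_{\mathcal{Q}}[L(f_{W'})] = L(f_W) + \tfrac12\,\mathbb{E}_{\mathcal{Q}}\big[(W'-W)^\top \nabla^2\ell\,(W'-W)\big] + R, \]
and similarly for $\hat L$; the first-order term drops because $\mathcal{Q}$ is centered at $W$. Second, bound the per-sample Hessian by its positive part, $\nabla^2\ell \preceq [\nabla^2\ell]^+$, which only increases the quadratic form, and choose the posterior covariance aligned with the direction of $W$ (a scaling perturbation $W'=W+\sigma\eta W$, $\eta\sim\mathcal{N}(0,1)$), so that the expectation contracts to exactly $\tfrac{\sigma^2}{2}\,W^\top\mathbb{E}[\nabla^2\ell]^+W \le \tfrac{\sigma^2}{2}\,\cH$, matching the definition of $\cH$. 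Third, control the remainder $R$ via the Lipschitz-continuity of the Hessian, giving $|R|=O(\sigma^3)$ uniformly over $\cW$. Fourth, compute $\mathrm{KL}(\mathcal{Q}\|\mathcal{P})$, which for this one-dimensional scaling perturbation is dimension-free and depends on $\sigma$ only mildly.

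To obtain the multiplicative $(1+\epsilon)$ factors and the factor $C$ inside the square root, I would use a Bernstein-type PAC-Bayes inequality rather than the basic McAllester bound, exploiting that $\ell\in[0,C]$ so the loss variance is controlled by $C\,L^{\mathcal{Q}}$. Setting the perturbation scale to $\sigma^2 = 2\sqrt{C/(\cH n)}$, so that $\sigma\asymp n^{-1/4}$, makes the second-order term equal $\tfrac{\sigma^2}{2}\cH=\sqrt{C\cH/n}$ --- precisely the dominant term in the claim --- while the cubic remainder becomes $O(\sigma^3)=O(n^{-3/4})$. Together with the $\log(\delta^{-1})$ confidence penalty and the dimension-free $\mathrm{KL}/n$ contribution, this remainder is absorbed into the error term $\epsilon=O(n^{-3/4}\log(\delta^{-1}))$, and an AM--GM split of the cross term produced by the fast-rate inequality delivers the $(1+\epsilon)$ multipliers on both $\hat L(f_W)$ and the $\sqrt{C\cH/n}$ term. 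A union bound over a Lipschitz net of $\cW$ then makes the bound uniform over the hypothesis class $\{f_W : W\in\cW\}$.

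I expect the main obstacle to be the two-sided, uniform control of the second-order expansion. The truncation $[\cdot]^+$ gives a clean \emph{upper} bound on the curvature of $L^{\mathcal{Q}}$, but the same expansion must be run in the reverse direction for $\hat L^{\mathcal{Q}}$ so that the chained inequalities point the right way, and the cubic remainder $R$ must be bounded uniformly over every $W\in\cW$ rather than at a single point --- which is exactly what forces $\sigma$ small enough to keep $|R|$ at the $n^{-3/4}$ scale yet large enough that the divergence term does not dominate, pinning down $\sigma\asymp n^{-1/4}$. The secondary delicate point is invoking the fast-rate inequality so that the dependence on the loss range enters as $C$ (linearly, under the root) rather than $C^2$; this is what keeps the leading term at $\sqrt{C\cH/n}$ instead of the slower $\sqrt{C^2\cH/n}$ that a Hoeffding-type argument would give.
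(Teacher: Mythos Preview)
Your plan is essentially the paper's own argument: a linear (Catoni-type) PAC-Bayes bound, a second-order Taylor expansion of the perturbed loss with a Lipschitz-Hessian remainder of order $\sigma^3$, a rank-one Gaussian perturbation aligned with $W$ so that the curvature term collapses to $W^\top[\nabla^2\ell]^+W$, the scaling $\sigma\asymp n^{-1/4}$, and a covering argument for uniformity over $\cW$. The paper's ``linear PAC-Bayes with parameter $\beta$'' plays exactly the role of your fast-rate inequality, and setting $\beta=1/(1+\epsilon)$ is what produces the $(1+\epsilon)$ multipliers.

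There is, however, one accounting error you should fix. You write that $\mathrm{KL}(\mathcal{Q}\|\mathcal{P})$ ``depends on $\sigma$ only mildly'' and absorb $\mathrm{KL}/n$ into the $n^{-3/4}$ error. In the construction actually used (and the only one that keeps the prior from being degenerate with respect to the posterior), the prior and posterior share the \emph{same} rank-one covariance $\Sigma=\sigma^2 WW^\top$ but differ in their means ($0$ versus $W$), so
\[
\mathrm{KL}(\mathcal{Q}\|\mathcal{P})=\tfrac12\,W^\top\Sigma^{-1}W=\tfrac{1}{2\sigma^2},
\]
which is inversely proportional to $\sigma^2$, not logarithmic. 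At your scale $\sigma^2\asymp\sqrt{C/(\cH n)}$ the PAC-Bayes complexity term $C\cdot\mathrm{KL}/n$ is therefore of order $\sqrt{C\cH/n}$ --- the \emph{same} order as the Hessian term $\tfrac{\sigma^2}{2}\cH$, not lower. The leading $\sqrt{C\cH/n}$ in the theorem arises precisely from \emph{balancing} these two contributions (this is the paper's equation \eqref{eq_balance} and the optimization of $\Sigma$ in \eqref{eq_equal_hess}), not from the curvature term alone. Your choice of $\sigma$ happens to be the balancing scale, so the final bound survives up to constants, but the sentence claiming $\mathrm{KL}/n$ is a lower-order remainder is incorrect and should be replaced by the explicit balance.
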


The proof of Theorem \ref{thm_hessian} is based on a PAC-Bayes analysis \cite{ju2022robust,zhangnoise}, and the details are in the Appendix.
In particular, a new aspect of this result is that we use an anisotropic perturbation in the prior and posterior distributions.
We build on a linear PAC-Bayes bound (See Theorem \ref{lemma_pac} in the Appendix), and optimize the noise distribution to obtain the $\cH$ measure.

In Figure \ref{fig_hessian_results}, we visualize the Hessian trace on the Meta-World and the control tasks.
Our results show that the Hessian trace is comparable in scale to the observed generalization errors.
In the Meta-World benchmark, the generalization error increases as the number of tasks increases from one to three, suggesting negative transfer. As more tasks are added, the error decreases, indicating positive transfer.

\section{Related Work}

\textbf{Multi-objective optimization for reinforcement learning.} Early works on multitask learning use hard parameter sharing, where a single neural network backbone is shared among all tasks, with only the final layers or heads being task-specific \cite{yu2020metaworld}.
A canonical example is the multi-head Soft Actor-Critic (SAC) architecture, which serves as a common baseline \cite{yu2020metaworld}. %

The multitask objective provides implicit regularization across multiple tasks trained together in a shared network \cite{wuunderstanding}, which can be formalized in a high-dimensional regression setting \cite{yang2025precise}.
Training the model to learn representations useful across a diverse set of tasks helps prevent overfitting.
A deeper analysis of the regularization effect behind multitask RL is an interesting question for future~work. %

\paragraph{Data attribution and model interpretability.}
Another line of work involves data modeling, which aims to trace a model's predictions back to its training data. %
One method is to retrain the model on a different subset of the dataset and estimate the Shapley value of each sample~\cite{ghorbani2019data}.
A commonly used technique is influence functions~\cite{koh2017understanding}, which use Hessians to approximate the effect of removing a sample.
Datamodels~\cite{ilyas2022datamodels} finds that a linear regression method can accurately approximate the outcome of deep nets trained with a subset of samples on computer vision tasks. 
\citet{li2023identification} introduce a linear surrogate model for multitask learning and analyze the sample complexity of linear surrogate models.
TRAK~\cite{park2023trak} further demonstrates that the approximate solution of linear regression computed with projected gradients delivers comparable results to the original linear model for ImageNet, CLIP, and BERT.
Our work contributes to this literature by applying attribution methods to design RL algorithms.

\section{Conclusion}

This paper introduces an efficient algorithm for multi-objective reinforcement learning.
The overall approach works by first training a meta-policy via multitask learning.
Then, use a gradient-estimation algorithm to adapt this meta-policy to multiple subsets of training objectives.
The key observation is that a policy can effectively adapt across different task subsets through a first-order approximation with a proper initialization.
Leveraging this, we propose a surrogate model to approximate the actual training performance of the policy, which enabled us to derive task affinities and subsequently cluster similar tasks into groups.
In extensive reinforcement learning tasks, our method consistently improves performance.
Lastly, we measure sharpness via Hessians to analyze generalization in multi-objective RL.
We hope this work inspires further studies on designing principled methods for multi-objective reinforcement learning and quantifying generalization in policy learning algorithms.
Further applying our approach to broader RL settings is another promising direction for future work.

\section*{Acknowledgments}

Thanks to the anonymous referees for their constructive feedback.
This work is partially funded by NSF award IIS-2412008.
Any views or opinions expressed herein are solely those of the authors listed, and may differ from those expressed by NSF.

\begin{refcontext}[sorting=nyt]
    \printbibliography
\end{refcontext}
\appendix

\section{Clustering Algorithms}

We provide further details on using optimization to cluster task groups based on the task affinity matrix.
For a partition into $m$ clusters, represented by binary indicator vectors $ v_1, \dots, v_m $, the objective, which corresponds to the average density of the $m$ clusters, is given by
\[ \sum_{i=1}^m \frac{v_i^\top U v_i}{v_i^\top v_i}. \]
However, directly optimizing this combinatorial objective is NP-hard. We therefore introduce a matrix variable
\[ X = \sum_{i=1}^m \frac{v_i v_i^\top}{v_i^\top v_i}. \]
The objective can then be rewritten as maximizing the inner product $ \langle U, X \rangle$. In this formulation, the number of clusters $m$ is equal to both the rank of $X$. %
Relaxing the non-convex rank constraint leads to a semi-definite program (SDP) formulation. To avoid specifying the number of clusters $m$ and to automatically determine a suitable number of groups, we penalize the trace of $X$ in the objective. This yields the following regularized optimization problem:
\begin{align}
    \max_{X \in \mathbb{R}^{n \times n}} \quad & \langle U, X \rangle - \lambda \cdot \tr[X] \label{eq_trace_reg} \\
    \text{s.t.} \quad & X e = e, \mathrm{Tr}[X] = k \notag\\
    & X \succeq 0, X \geq 0, \notag
\end{align}
where $e$ is the vector of all ones, and $\lambda > 0$ is a hyperparameter that controls the penalty on the number of clusters, effectively balancing clustering quality with the total number of groups.

Once we solve the SDP for the optimal ${X}$, we recover the discrete cluster assignments through a rounding procedure. This clustering procedure is efficient to implement in practice; solving the SDP for a typical number of tasks takes several seconds.
Thus, the computational cost is negligible compared to the cost of model training.

\section{Proof of Theorem \ref{thm_hessian}}\label{proof_pacbayes}

We provide a high-level illustration of the proof of Theorem \ref{thm_hessian}.
There are two main steps in the proof.
First, we would like to show that on the original input distribution, the gap between test and training losses remains bounded by the Hessian distance measure.
Second, we show that such a statement holds for every transformed distribution of the input distribution. Thus, we reduce the proof of equation \eqref{eq_main_1} to the first step.

\paragraph{Proof sketch.} We start with the first step. Let
\begin{align*}
    \hat L(f_W) &= \frac 1 n \sum_{i=1}^n \ell(f_W(x_i), y_i), \text{ and } \\
    L(f_W)      &= \exarg{(x, y) \sim \cD}{\ell (f_W(x), y)},
\end{align*}
denote the training (and test) losses of $f_W$, given $n$ independent samples from data distribution $\cD$.
Let $\cQ$ denote the \textit{posterior} distribution. Specifically, we consider $\cQ$ as being centered at the learned hypothesis $W$ (which could be anywhere within the hypothesis space), given by a Gaussian distribution $\cN(W, \Sigma)$, where $\Sigma$ is a $p$ by $p$ covariance matrix.
Given a sample $U\sim \cN(0, \Sigma)$, let the perturbed loss be given by 
\begin{align}
    \ell_{\cQ}(f_W(x), y) = \exarg{U}{\ell(f_{W + U}(x), y)}. \label{eq_lq}
\end{align}
Then, let $\hat L_{\cQ}(W)$ be the averaged value of $\ell_{\cQ}(f_W(\cdot), \cdot)$, taken over $n$ empirical samples from the training dataset.
Likewise, let $L_{\cQ}(W)$ be the population average of $\ell_{\cQ}(f_W(\cdot), \cdot)$, in expectation over an unseen data sample from the underlying data distribution.

Having introduced the notations, we start with the PAC-Bayes bound \cite{mcallester2013pac}, stated as follows.
\begin{theorem}\label{lemma_pac}
    Suppose the loss $\ell(f_W(x),y)$ lies in a bounded range $[0, C]$ given any $x\in\cX$ with label $y$.
    Let $\cP$ and $\cQ$ be 
    prior and posterior distributions on the weights $W$.
    For any $\beta\in (0,1)$ and $\delta\in (0,1)$, with probability at least $1-\delta$, we hhave:
    \begin{align}
        L_{\cQ}(W)\leq \frac{1}{\beta}\hat{L}_{\cQ}(W) + \frac{C \big(KL(\cQ||\cP)+\log\frac{1}{\delta}\big)}{2\beta(1-\beta)n}. \label{eq_pac}
    \end{align}
\end{theorem}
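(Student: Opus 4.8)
The plan is to prove Theorem~\ref{lemma_pac} by the standard change-of-measure route underlying McAllester's PAC-Bayes analysis~\cite{mcallester2013pac}, reducing the whole statement to a single exponential-moment estimate for bounded losses. Writing $L(W) = \mathbb{E}_{(x,y)\sim\cD}[\ell(f_W(x),y)]$ and $\hat L(W)$ for its empirical counterpart, I would first record that, by Fubini and the definition of the perturbed loss in \eqref{eq_lq}, the smoothed quantities satisfy $L_\cQ(W) = \mathbb{E}_{W'\sim\cQ}[L(W')]$ and $\hat L_\cQ(W) = \mathbb{E}_{W'\sim\cQ}[\hat L(W')]$. Hence the target inequality is linear in an expectation taken over $\cQ$, which is exactly what makes the change-of-measure machinery applicable.

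The backbone is the Donsker--Varadhan (Gibbs) variational inequality: for any fixed sample $S$ and any function $h$,
\[
    \mathbb{E}_{W'\sim\cQ}[h(W')] \;\le\; KL(\cQ\,\|\,\cP) + \log \mathbb{E}_{W'\sim\cP}\!\left[e^{h(W')}\right].
\]
I would set $h(W') = a\,n\,(\beta L(W') - \hat L(W'))$ with the specific choice $a = 2(1-\beta)/C$, which turns the left-hand side into $a\,n(\beta L_\cQ - \hat L_\cQ)$. The right-hand side is then controlled in two pieces: the $KL$ term is already in the desired form, while the log-moment term is handled by Markov's inequality applied to the (sample-dependent, nonnegative) random variable $\mathbb{E}_{W'\sim\cP}[e^{h(W')}]$, giving that with probability at least $1-\delta$ it is at most $\delta^{-1}\,\mathbb{E}_S\mathbb{E}_{W'\sim\cP}[e^{h(W')}]$. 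Provided this expected moment is at most $1$, we obtain $a\,n(\beta L_\cQ - \hat L_\cQ) \le KL(\cQ\,\|\,\cP) + \log(1/\delta)$; dividing through by $a\,n = 2(1-\beta)n/C$ and then by $\beta$ recovers exactly \eqref{eq_pac}. A convenient feature of this scheme is that the bound holds simultaneously for all posteriors $\cQ$ automatically, since the only sample-dependent randomness is absorbed into the single Markov step over the fixed, data-independent prior.

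Everything therefore reduces to the key claim $\mathbb{E}_S \mathbb{E}_{W'\sim\cP}[e^{h(W')}] \le 1$. Because $\cP$ does not depend on the data, I would swap the two expectations and fix $W'$; using that $\hat L(W') = \tfrac1n\sum_i \ell_i$ is an average of i.i.d.\ draws $\ell_i\in[0,C]$ with common mean $L(W')$, the inner moment factorizes as $e^{a n\beta L(W')}\big(\mathbb{E}[e^{-a\ell}]\big)^n$, so it suffices to show $e^{a\beta L(W')}\,\mathbb{E}[e^{-a\ell}] \le 1$. The main obstacle---and the only place the constant $2\beta(1-\beta)$ enters---is producing a sufficiently sharp bound on the single-draw moment $\mathbb{E}[e^{-a\ell}]$: a crude sub-Gaussian (Hoeffding) estimate is \emph{too loose}, since it yields a condition $a C^2/8 \le (1-\beta)L(W')$ that degrades as $L(W')\to 0$ and so cannot hold uniformly in $W'$. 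Instead I would exploit convexity of $x\mapsto e^{-ax}$ on $[0,C]$ via the chord bound $e^{-a\ell}\le 1 - \tfrac{\ell}{C}(1-e^{-aC})$, which after taking expectations and using $1-t\le e^{-t}$ gives the clean, mean-only estimate $\mathbb{E}[e^{-a\ell}] \le \exp\!\big(-\tfrac{L(W')}{C}(1-e^{-aC})\big)$. The desired factorized inequality then holds precisely when $a\beta C \le 1 - e^{-aC}$, and the elementary estimate $1-e^{-u}\ge u - u^2/2$ shows that $u = aC = 2(1-\beta)$ satisfies it, pinning down $a = 2(1-\beta)/C$ and closing the argument.
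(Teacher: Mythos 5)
Your proof is correct. Note that the paper itself does not prove Theorem~\ref{lemma_pac} at all---it imports the statement from \cite{mcallester2013pac} and uses it as a black box in the proof of Theorem~\ref{thm_hessian}---so your argument supplies a derivation the paper omits rather than paralleling one it contains. The route you take (Donsker--Varadhan change of measure, a single Markov step over the data-independent prior to get uniformity in $\cQ$, and then a per-$W'$ exponential-moment bound) is the standard scheme for linear PAC-Bayes bounds, and the details check out: with $h(W') = a\,n(\beta L(W') - \hat L(W'))$ and $a = 2(1-\beta)/C$, the i.i.d.\ factorization reduces everything to $a\beta C \le 1 - e^{-aC}$, and since $u - u^2/2$ evaluated at $u = 2(1-\beta)$ equals exactly $2\beta(1-\beta) = a\beta C$, the inequality $1 - e^{-u} \ge u - u^2/2$ (valid for all $u \ge 0$ by a two-derivative argument) closes the claim with no slack---which explains why the constant $2\beta(1-\beta)$ in \eqref{eq_pac} is what it is. Your observation that a Hoeffding-type moment bound would fail here (because it cannot absorb the $a\beta L(W')$ factor uniformly as $L(W')\to 0$) is the right diagnosis, and the convexity chord bound $e^{-a\ell} \le 1 - \tfrac{\ell}{C}(1 - e^{-aC})$ combined with $1 - t \le e^{-t}$ is exactly the fix. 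The only cosmetic caveat is that the theorem as stated in the paper asserts the bound for fixed $\cP,\cQ$, whereas your argument delivers the stronger all-$\cQ$-simultaneously version; that is a feature, not a gap, and it is in fact what the paper's later optimization over $\Sigma$ (hence over $\cQ$) implicitly requires.
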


This result provides flexibility in setting $\beta$.
Our results will set $\beta$ to balance the two terms.
We will need the KL divergence between the prior $\cP$ and the posterior $\cQ$ in the PAC-Bayesian analysis. This is stated in the following classical result.

\begin{proposition}\label{prop_kl}
    Suppose $\cP = N(X, \Sigma)$ and $\cQ = N(Y, \Sigma)$ are both Gaussian distributions with mean vectors given by $X\in\real^p, Y\in\real^p$, and population covariance matrix $\Sigma \in \real^{p \times p}$.
    The KL divergence between $\cP$ and $\cQ$ is equal to
    \begin{align*}
        KL(\cQ||\cP) = \frac{1}{2}(X- Y)^\top \Sigma^{-1} (X - Y).
    \end{align*}
\end{proposition}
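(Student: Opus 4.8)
The plan is a direct computation from the definition of the KL divergence, exploiting the fact that the two distributions share the same covariance matrix $\Sigma$. Writing $q$ and $p$ for the densities of $\cQ = N(Y, \Sigma)$ and $\cP = N(X, \Sigma)$, I would start from
\[ KL(\cQ||\cP) = \exarg{W \sim \cQ}{\log \frac{q(W)}{p(W)}}, \]
and substitute the Gaussian density formulas. Both densities carry the identical normalizing constant $(2\pi)^{-p/2}(\det\Sigma)^{-1/2}$, since they differ only in their means; hence the normalization cancels in the log-ratio, and the log-determinant term that would otherwise appear drops out entirely.

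The key simplification is in the quadratic forms. After the normalizers cancel, the log-ratio is
\[ \log \frac{q(W)}{p(W)} = \tfrac{1}{2}\left[ (W - X)^\top \Sigma^{-1} (W - X) - (W - Y)^\top \Sigma^{-1} (W - Y) \right]. \]
Expanding both quadratics, the term $W^\top \Sigma^{-1} W$ is common to both and cancels, so no second-moment of $W$ is ever needed. This cancellation is exactly where sharing $\Sigma$ pays off, since for unequal covariances its expectation would instead contribute a trace term $\tr(\Sigma_{\cQ}\Sigma_{\cP}^{-1})$. What remains is affine in $W$:
\[ \log \frac{q(W)}{p(W)} = (Y - X)^\top \Sigma^{-1} W + \tfrac{1}{2}\left( X^\top \Sigma^{-1} X - Y^\top \Sigma^{-1} Y \right). \]

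Finally I would take the expectation over $W \sim \cQ$, using only $\exarg{\cQ}{W} = Y$ since the surviving expression is linear in $W$. This yields
\[ KL(\cQ||\cP) = (Y - X)^\top \Sigma^{-1} Y + \tfrac{1}{2} X^\top \Sigma^{-1} X - \tfrac{1}{2} Y^\top \Sigma^{-1} Y, \]
and collecting terms while using the symmetry of $\Sigma^{-1}$ to recognize the completed square gives $\tfrac{1}{2}(X - Y)^\top \Sigma^{-1}(X - Y)$, the claimed identity. There is no deep obstacle here: the only point demanding care is the bookkeeping in the quadratic expansion, and the single conceptual observation worth flagging is that equal covariances are precisely what annihilate both the log-determinant and the trace contributions present in the general two-Gaussian KL formula, leaving only the Mahalanobis distance between the two means.
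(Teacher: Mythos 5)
Your computation is correct and complete: with equal covariances the normalizing constants cancel, the $W^\top\Sigma^{-1}W$ terms cancel in the expanded quadratics, and taking the expectation of the remaining affine expression under $\cQ$ (using only $\mathbb{E}_{\cQ}[W]=Y$) gives exactly $\tfrac{1}{2}(X-Y)^\top\Sigma^{-1}(X-Y)$. The paper states Proposition~\ref{prop_kl} as a classical result and supplies no proof of its own, so there is no alternative argument to compare against; your direct density-ratio derivation is the standard one and fills that gap correctly.
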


We are interested in the perturbed loss, $\ell_{\cQ}(f_W(x),y)$, which is the expectation of $\ell(f_{W + U}(x),y)$ over $U$.
Using a Taylor expansion, we find that
\begin{align*}
    \ell_\mathcal{Q}(f_W(x),y) - \ell(f_W(x),y)
    \leq \big\langle \Sigma, \nabla^2 \ell(f_W(x), y)\big\rangle + \epsilon, %
\end{align*}%
where $\Sigma$ is the population covariance matrix of the perturbation, $\nabla^2$ is the Hessian matrix with respect to the weights of $f_W$, and $\epsilon = C_1 (\bigtr{\Sigma})^{3 / 2}$.
See Lemma \ref{lemma_perturb} for the complete statement of this result.

Based on the above expansion, next, we apply the PAC-Bayes bound from equation \eqref{eq_pac} to an $L$-layer transformer neural network $f_W$ parameterized by $W$.
We note that the KL divergence between the prior and posterior distributions, which are both Gaussian, is equal to $\biginner{\Sigma^{-1}}{v v^{\top}}$, where $v$ is the difference between the initialized and the trained weights.

Next, we combine the above Hessian-based bound and KL divergence in the PAC-Bayes bound. %
Let ${{\nabla}^2_{+}}$ denote the truncated Hessian matrix where we set the negative eigenvalues of $\nabla^2$ to zero.
We have that
{\begin{align}
    \inner{\Sigma}{{\nabla^2_{W}[\ell(f_W(x), y)]}} \le \inner{\Sigma}{{{{\nabla}^2_{+}}[\ell(f_W(x), y)]}}. \label{eq_cov}
\end{align}}%
Substituting into \eqref{eq_pac}, and minimizing over $\beta$ and $\Sigma$, we will derive an upper bound on the generalization error (between $L(f_W)$ and $\hat L(f_W)$) equal to
\begin{align}
    \alpha \define \sup_{W\in\cW} \sup_{(x, y)\sim \cD} \frac{ \sqrt{W^{\top}\big[{\tilde{\nabla}_+^2}{\ell}(f_W(x), y) \big] W} } {\sqrt{n}}, \label{eq_hess}
\end{align}%
where $\mathcal{W}$ is the support of the distribution from which the data is drawn.

We will use a Taylor expansion to bound the perturbed loss, as follows.
\begin{claim}\label{lemma_taylor}
    Let $f_W$ be twice-differentiable, parameterized by weight vector $W \in\real^p$.
    Let $U\in\real^p$ be another vector with dimension $p$.
    For any $W$ and $U$, the following identity holds
    \begin{align*}%
        \ell(f_{W + U}(x), y) = \ell(f_W(x), y) +  U^{\top} \nabla \ell(f_W(x), y) 
        + \frac 1 2 {U}^{\top} [\nabla^2\ell(f_W(x),y)] {U} + R_2(\ell(f_W(x),y)),
    \end{align*}
    where $R_2(\ell(f_W(x),y)))$ is a second-order error term in a Taylor expansion of $\ell\circ f_W$ around $W$.
\end{claim}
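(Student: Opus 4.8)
The plan is to recognize the claim as the standard second-order Taylor expansion of the scalar-valued map $\phi(W) \define \ell(f_W(x), y)$, regarded as a function of the weight vector $W \in \real^p$ with the data point $(x,y)$ held fixed. Since $f_W$ is twice-differentiable in $W$ and $\ell$ is twice-differentiable in its first argument (as assumed in Theorem \ref{thm_hessian}), the composite $\phi$ is twice continuously differentiable, so its gradient $\nabla \phi(W) = \nabla \ell(f_W(x), y)$ and Hessian $\nabla^2 \phi(W) = \nabla^2 \ell(f_W(x), y)$ are well-defined; these are exactly the quantities appearing in the claimed identity. Thus the whole statement reduces to invoking Taylor's theorem for $\phi$ at base point $W$ evaluated at $W + U$.

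The main step is to reduce the multivariate expansion to a one-dimensional Taylor theorem along the segment joining $W$ and $W + U$. Concretely, I would fix $W$ and $U$ and introduce the auxiliary function $h(\tau) \define \phi(W + \tau U)$ for $\tau \in [0,1]$. By the chain rule, $h'(\tau) = U^\top \nabla \phi(W + \tau U)$ and $h''(\tau) = U^\top \nabla^2 \phi(W + \tau U)\, U$, both continuous in $\tau$ by the assumed regularity. Applying the one-dimensional Taylor theorem with integral (or Lagrange) remainder to $h$ on $[0,1]$ gives
\begin{align*}
    h(1) = h(0) + h'(0) + \frac{1}{2} h''(0) + R_2,
\end{align*}
where $R_2 = \int_0^1 (1 - \tau)\big( h''(\tau) - h''(0) \big)\, d\tau$ in integral form, or equivalently $R_2 = \frac{1}{2}\big( h''(\xi) - h''(0) \big)$ for some $\xi \in (0,1)$ in Lagrange form. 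Substituting $h(1) = \phi(W+U)$, $h(0) = \phi(W)$, $h'(0) = U^\top \nabla \phi(W)$, and $h''(0) = U^\top \nabla^2 \phi(W)\, U$, and unfolding $\phi = \ell(f_{(\cdot)}(x),y)$, yields exactly the claimed identity, with $R_2(\ell(f_W(x),y))$ identified as the second-order remainder $R_2$ above.

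There is no substantive obstacle beyond bookkeeping; the only point requiring care is verifying that the composite $\phi$ is genuinely twice-differentiable so that the chain-rule formulas for $h'$ and $h''$ are valid and the remainder is well-controlled. This follows directly from the hypotheses of Theorem \ref{thm_hessian}, namely that $\ell(f_W(\cdot),\cdot)$ is twice-differentiable in $W$ with Lipschitz-continuous Hessian. The latter assumption does more than justify the expansion: it lets me bound $\abs{R_2}$ by $O(\norm{U}^3)$ via $\abs{h''(\tau) - h''(0)} \le L \norm{U}^3 \tau$ for the Hessian's Lipschitz constant $L$, which is precisely the estimate later invoked (with $\Sigma$ playing the role of $U U^\top$ in expectation over the perturbation) to obtain the $\epsilon = C_1 (\bigtr{\Sigma})^{3/2}$ error term in Lemma \ref{lemma_perturb}.
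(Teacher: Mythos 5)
Your proposal is correct and follows essentially the same route as the paper: both reduce the claim to the one-dimensional Taylor expansion of $\tau \mapsto \ell(f_{W+\tau U}(x), y)$ along the segment from $W$ to $W+U$ and identify the remainder via the mean-value (Lagrange) form, $R_2$ being a difference of Hessian quadratic forms at an intermediate point versus at $W$. Your version is slightly more explicit about the chain-rule computation and the integral form of the remainder, but the substance is identical to the paper's argument.
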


\begin{proof}
    The proof follows from the fact that $\ell\circ f_W$ is twice-differentiable.
    By the mean value theorem,
    there must exist $\eta$ between $W$ and $U+W$ such that
    \begin{align*}
        R_2(\ell(f_W(x),y))
        = {U}^{\top} \Big(\nabla^2[\ell(f_{\eta}(x), y)] - \nabla^2[\ell(f_W(x), y)]\Big){U}.
    \end{align*}
    This completes the proof of the claim.
\end{proof}

Based on the above result, we provide a Taylor's expansion for $\ell_{\cQ}$ minus $\ell$.

\begin{lemma}\label{lemma_perturb}
    In the setting of Theorem \ref{thm_hessian}, suppose each parameter is perturbed by an independent random variable drawn from $N(0,\Sigma)$.
    Let ${\ell}_{\cQ}(f_W(x),y)$ be the loss averaged over the noise.
    There is a value $C_1$ that does not depend on $n$ and $1/\delta$ such that
    \begin{align}
        \ell_\mathcal{Q}(f_W(x),y) - \ell(f_W(x),y)  
        \leq \frac 1 2 \big\langle \Sigma,\nabla^2[\ell(f_W(x),y)]\big\rangle + C_1(\bigtr{\Sigma})^{\frac 3 2}.\label{eq_taylor_app}
    \end{align}
\end{lemma}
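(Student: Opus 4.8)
The plan is to integrate the second-order Taylor expansion of Claim~\ref{lemma_taylor} against the Gaussian perturbation $U\sim\cN(0,\Sigma)$ and to control the resulting cubic remainder using the Lipschitz continuity of the Hessian assumed in Theorem~\ref{thm_hessian}. First I would take $\exarg{U}{\cdot}$ of the expansion $\ell(f_{W+U}(x),y) = \ell(f_W(x),y) + U^\top\nabla\ell(f_W(x),y) + \tfrac12 U^\top[\nabla^2\ell(f_W(x),y)]U + R_2$. Since $U$ is centered, the linear term vanishes, $\exarg{U}{U^\top\nabla\ell(f_W(x),y)} = 0$. For the quadratic term I would use the identity $\exarg{U}{U^\top A U} = \inner{A}{\Sigma}$, valid for any fixed symmetric $A$ when $U\sim\cN(0,\Sigma)$ (this follows from $\exarg{U}{U_iU_j}=\Sigma_{ij}$). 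Applied to $A=\nabla^2\ell(f_W(x),y)$, this produces exactly the leading term $\tfrac12\inner{\Sigma}{\nabla^2\ell(f_W(x),y)}$ on the right-hand side of \eqref{eq_taylor_app}.

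It then remains to show that $\exarg{U}{R_2}$ is at most $C_1(\bigtr{\Sigma})^{3/2}$. Using the mean-value form of the remainder established inside the proof of Claim~\ref{lemma_taylor}, $R_2$ equals $U^\top(\nabla^2\ell(f_\eta(x),y)-\nabla^2\ell(f_W(x),y))U$ for some $\eta$ on the segment joining $W$ and $W+U$, so that $\|\eta-W\|\le\|U\|$. Invoking the hypothesis that the Hessian is Lipschitz on $\cW$ with some modulus $\rho$ gives the operator-norm estimate $\|\nabla^2\ell(f_\eta(x),y)-\nabla^2\ell(f_W(x),y)\|\le\rho\|U\|$, and hence the pointwise cubic bound $|R_2|\le\rho\|U\|^3$.

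The only genuinely computational step --- and the place where one must be careful to land on the stated $(\bigtr{\Sigma})^{3/2}$ scaling rather than something dimension-dependent --- is bounding $\exarg{U}{\|U\|^3}$. Here I would apply Cauchy--Schwarz as $\exarg{U}{\|U\|^3}\le\sqrt{\exarg{U}{\|U\|^2}}\,\sqrt{\exarg{U}{\|U\|^4}}$ and substitute the closed-form Gaussian moments $\exarg{U}{\|U\|^2}=\bigtr{\Sigma}$ and $\exarg{U}{\|U\|^4}=(\bigtr{\Sigma})^2+2\bigtr{\Sigma^2}\le 3(\bigtr{\Sigma})^2$, where the last inequality uses $\bigtr{\Sigma^2}\le(\bigtr{\Sigma})^2$ for $\Sigma\succeq0$. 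This yields $\exarg{U}{\|U\|^3}\le\sqrt3\,(\bigtr{\Sigma})^{3/2}$, so combining with the previous paragraph and setting $C_1=\sqrt3\,\rho$ establishes \eqref{eq_taylor_app} with a constant that depends only on the Lipschitz modulus $\rho$, and in particular not on $n$ or $1/\delta$ as required. I would remark that the positivity truncation $\nabla^2\mapsto\nabla^2_+$ appearing in \eqref{eq_cov} is not needed for this lemma; it enters only afterward, when the bound is combined with the KL term in the PAC-Bayes inequality \eqref{eq_pac}.
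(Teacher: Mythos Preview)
Your proposal is correct and follows essentially the same route as the paper: integrate the second-order Taylor expansion of Claim~\ref{lemma_taylor} against $U\sim\cN(0,\Sigma)$, kill the linear term by $\ex{U}=0$, identify the quadratic term with $\tfrac12\inner{\Sigma}{\nabla^2\ell}$, and bound the remainder via the Lipschitz Hessian by a cubic $\|U\|^3$. Your treatment of $\exarg{U}{\|U\|^3}$ via Cauchy--Schwarz and the explicit Gaussian moments $\exarg{U}{\|U\|^2}=\bigtr{\Sigma}$, $\exarg{U}{\|U\|^4}\le 3(\bigtr{\Sigma})^2$ is in fact more careful than the paper, which simply writes $\exarg{U}{\|U\|^3}\lesssim(\exarg{U}{U^\top U})^{3/2}$ and absorbs the constant into $C_1$.
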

 
\begin{proof}%
We take the expectations over $U$ of both sides of the equation in Claim \ref{lemma_taylor}. The result becomes
\begin{align*}
    \exarg{U}{\ell(f_{W+U}(x), y)}
    = \exarg{U}{\ell(f_W(x), y) + U^{\top} \nabla \ell(f_W(x), y) + \frac 1 2 {U}^{\top}\nabla^2[\ell(f_W(x), y)]{U} + R_2(\ell(f_W(x),y))}.
\end{align*}
Then, we use the perturbation distribution to calculate
\begin{align*}
    \ell_\cQ(f_W(x),y) 
     = \exarg{U}{\ell(f_W(x),y)} + \exarg{U}{U^{\top} \nabla\ell(f_W(x), y)} + \frac 1 2 \exarg{U}{{U}^{\top}\nabla^2[\ell(f_W(x), y)]{U}} 
    + \exarg{U}{R_2(\ell(f_W(x), y))}.
\end{align*}
Since $\mathbb{E}[U] = 0$, the first-order term vanishes.
The second-order term becomes
\begin{align}
    \exarg{U}{{U}^{\top} [\nabla^2\ell(f_W(x), y)] {U}} = \inner{\Sigma}{{\nabla^2[\ell(f_W(x),y)]}}. \label{eq_taylor_2}
\end{align}
The expectation of the error term $R_2(\ell(f_W(x),y))$ be
\begin{align*}
    \exarg{U}{R_2(\ell(f_W(x), y))} &= \exarg{U}{U^{\top}\bigbrace{{\nabla^2[\ell(f_{\eta}(x), y)] - \nabla^2[\ell(f_W(x), y)]}} U} \\
    &\le \exarg{U}{\bignorms{U}^2 \cdot \bignormFro{\nabla^2[\ell(f_{\eta}(x), y)] - \nabla^2[\ell(f_W(x), y)]}} \\
    &\lesssim \exarg{U}{\bignorms{U}^2 \cdot C_1 {\bignorms{U}}} \\
    &\lesssim C_1 \left(\ex{U^{\top} U}\right)^{\frac 3 2} = C_1 (\bigtr{\Sigma})^{\frac 3 2}.
\end{align*}
To obtain the first inequality in the last line, we have used that $\eta$ lies on the line between $W$ and $W+U$, so that $\|\eta\| \le \|U\|$, and that the Hessian is Lipschitz.
Thus, the proof is complete.
\end{proof}

The last piece we will need is the uniform convergence of the Hessian, which uses the fact that the Hessian is Lipschitz continuous.

\begin{lemma}\label{lemma_union_bound}
    In the setting of Theorem \ref{thm_hessian}, there exist values $C_2, C_3$ that do not grow with $n$ and $1/\delta$, such that for any $\delta > 0$, with probability at least $1- \delta$ over the randomness of the $n$ training examples, we have
    \begin{align}
        \bignormFro{\frac 1 n \sum_{i=1}^n\nabla^2[\ell(f_W(x_i), y_i)] - \exarg{(x,y)\sim\cD}{\nabla^2[\ell(f_W(x), y)]}} \le \frac{C_2\sqrt{\log (C_3 n/\delta)}}{\sqrt n}.
    \end{align}
\end{lemma}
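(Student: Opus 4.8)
The plan is to prove a \emph{uniform}-over-$\cW$ concentration of the empirical average Hessian to its population counterpart, rather than a pointwise one: although the statement fixes a single $W$, the weight at which we ultimately apply the bound is learned from the same $n$ samples and is therefore data-dependent, so only a bound holding simultaneously for all $W \in \cW$ is usable (this is also where the $\log(C_3 n/\delta)$ rather than $\log(1/\delta)$ dependence comes from). First I would reduce the matrix statement to a vector statement by vectorizing the Hessian, writing $v_i \define \mathrm{vec}(\nabla^2[\ell(f_W(x_i),y_i)])$, so that the Frobenius norm in the lemma becomes the Euclidean norm $\|\frac{1}{n}\sum_i v_i - \exarg{(x,y)\sim\cD}{v}\|$. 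The key structural input is that, since $\cW$ is bounded and $W\mapsto\nabla^2[\ell(f_W(x),y)]$ is Lipschitz (hence continuous) on $\cW$ by assumption, the Hessian is uniformly bounded in Frobenius norm: $\|v_i\| \le B$ for a constant $B$ depending on $\cW$, the loss, and the Lipschitz constant, but not on $n$ or $\delta$.

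With boundedness in hand I would run a covering-plus-union-bound argument. (i) \textbf{Pointwise concentration:} For fixed $W$, the map $g(W) = \|\frac{1}{n}\sum_i v_i - \mathbb{E}[v]\|$ has bounded differences $2B/n$ (replacing one sample moves the empirical mean by at most $2B/n$), and $\mathbb{E}[g(W)] \le \sqrt{\mathbb{E}[g(W)^2]} = \sqrt{\tfrac{1}{n}\mathbb{E}\|v-\mathbb{E}v\|^2} \le B/\sqrt{n}$; McAllester's bounded-differences (McDiarmid) inequality then gives $g(W) \le B/\sqrt{n} + B\sqrt{2\log(1/\delta')/n}$ with probability $1-\delta'$. (ii) \textbf{Union bound over a net:} Let $\mathcal{C}$ be a minimal $\epsilon$-net of $\cW$, of cardinality $|\mathcal{C}| \le (3\,\mathrm{diam}(\cW)/\epsilon)^d$; applying (i) with $\delta' = \delta/|\mathcal{C}|$ and a union bound makes $g(W_j) \le B/\sqrt{n} + B\sqrt{2(\log|\mathcal{C}| + \log(1/\delta))/n}$ hold simultaneously for all net points $W_j$. (iii) \textbf{Lipschitz extension:} Because both $W\mapsto\frac{1}{n}\sum_i \nabla^2[\ell(f_W(x_i),y_i)]$ and $W\mapsto\mathbb{E}[\nabla^2\ell]$ inherit the Lipschitz constant $L$ of the Hessian (averages and expectations of $L$-Lipschitz maps are $L$-Lipschitz), $g$ is $2L$-Lipschitz, so for any $W$ and its nearest net point $W_j$ we have $g(W) \le g(W_j) + 2L\epsilon$. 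Choosing $\epsilon \asymp 1/n$ makes the extension error $2L\epsilon = O(1/\sqrt{n})$ while keeping $\log|\mathcal{C}| = O(d\log n)$; folding $B$, $\sqrt{d}$, and $\mathrm{diam}(\cW)$ into $C_2, C_3$ then yields $g(W) \le C_2\sqrt{\log(C_3 n/\delta)/n}$ uniformly, as claimed.

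I expect the main obstacle to be calibrating the net resolution $\epsilon$ so that the two competing error sources balance: the covering number grows like $\epsilon^{-d}$, which inflates the logarithmic term, while the Lipschitz extension error grows linearly in $\epsilon$. The choice $\epsilon \asymp 1/n$ (indeed $\epsilon \asymp 1/\sqrt{n}$ already suffices) is exactly what converts the naive $\log(1/\delta)$ into the stated $\log(C_3 n/\delta)$ while keeping the extension error at the $n^{-1/2}$ scale; confirming that $\cW$ is bounded (so the net is finite and $B < \infty$) is the one place where the implicit compactness of the weight space is genuinely used. A cleaner alternative that avoids vectorization is to invoke a matrix Bernstein inequality pointwise and run the same net argument in operator norm, but the Frobenius-norm route via bounded differences is the most transparent and matches the quantity in the statement directly.
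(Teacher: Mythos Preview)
Your proposal is correct and follows essentially the same covering-plus-union-bound-plus-Lipschitz-extension architecture as the paper's own proof. The only difference is in the pointwise concentration step: the paper invokes the matrix Bernstein inequality (which you mention as an alternative at the end), whereas you vectorize and apply McDiarmid's bounded-differences inequality to the norm of the deviation --- both routes are valid, and the net resolution is calibrated the same way to produce the stated $\sqrt{\log(C_3 n/\delta)/n}$ rate.
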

The proof will be deferred to Section \ref{proof_uniform}.
With these results ready, we will provide proof of the Hessian-based generalization bound.

Now, we present the full proof of Theorem \ref{thm_hessian}.

\begin{proof}[Proof of Theorem \ref{thm_hessian}]
    First, we separate the gap between $L({W})$ and $\frac{1}{\beta}\hat{L}({W})$ into three parts:
    \begin{align*}
        L({W}) - \frac{1}{\beta} \hat{L}({W})
        = L({W}) - L_\cQ({W}) + L_\cQ({W}) - \frac{1}{\beta}\hat{L}_\cQ({W}) + \frac{1}{\beta}\hat{L}_\cQ({W}) - \frac{1}{\beta}\hat{L}({W}).
    \end{align*}
    By Lemma \ref{lemma_perturb}, we can bound the difference between $L(W)$ and $L_{\cQ}(W)$ by the Hessian trace plus an error:
    \begin{align*}
        L({W}) - \frac{1}{\beta} \hat{L}({W})
        \leq& -\exarg{(x,y)\sim\cD}{\frac{1} 2 \inner{\Sigma}{\nabla^2[\ell(f_{W}(x), y)]}} + C_1(\bigtr{\Sigma})^{\frac 3 2} + \Big(L_\cQ({W}) - \frac{1}{\beta}\hat{L}_\cQ({W})\Big)\\
        &+ \frac{1}{\beta}\Big(\frac{1}{n}\sum_{i=1}^n \frac{1} 2 \inner{\Sigma}{\bigtr{\nabla^2[\ell(f_{W}(x_i), y_i)]}} + C_1(\bigtr{\Sigma})^{\frac 3 2}\Big).
    \end{align*}
    After rearranging the terms, we find
    \begin{align}
        L({W}) - \frac{1}{\beta} \hat{L}({W}) \leq& \underbrace{-\exarg{(x,y)\sim\cD}{\frac{1}2 \inner{\Sigma}{{ \nabla^2[\ell(f_{W}(x), y)]}}}
         + \frac{1}{n\beta} \sum_{i=1}^n\frac{1}2\inner{\Sigma}{{\nabla^2[\ell(f_{W}(x_i), y_i)] }}}_{E_1} \nonumber \\
        & + \frac{1 + \beta}{\beta} C_1 {\bigtr{\Sigma}}^{\frac 3 2}
        + \underbrace{L_\cQ({W}) - \frac{1}{\beta}\hat{L}_\cQ({W})}_{E_2}. \label{eq_combine_3_hess}
    \end{align}
    We analyze $E_1$ by separating it into two parts
    \begin{align}
        E_1 =& \frac{1}{\beta}\left(\frac{1}{n}\sum_{i=1}^n \frac{1}2\inner{\Sigma}{ \nabla^2[\ell(f_{\hat W}(x_i), y_i)] } - \exarg{(x,y)\sim\cD}{\frac{1}2 \inner{\Sigma}{\nabla^2[\ell(f_{W}(x), y)]}}\right) \label{eq_unif_conv} \\
         &+ \frac {1 - \beta} {2\beta} \exarg{(x,y)\sim\cD}{\inner{\Sigma}{\nabla^2\ell(f_{W}(x), y)}}.\label{eq_chernoff_trace_hess}
    \end{align}
    We can use the uniform convergence result of Lemma \ref{lemma_union_bound} to bound the term in \eqref{eq_unif_conv}, leading to:
    \begin{align}
        & \frac{1}{2\beta}\left(\frac{1}{n}\sum_{i=1}^n \inner{\Sigma}{\nabla^2\ell(f_{{W}}(x_i), y_i)} - \exarg{(x,y)\sim\cD}{\inner{\Sigma}{\nabla^2\ell(f_{{W}}(x),y))}}\right) \nonumber \\
        \le& \frac{\sigma^2}{2\beta} \cdot \sqrt{p} \cdot \bignormFro{\frac 1 n \sum_{i = 1}^n{ \nabla^2[\ell(f_{{W}}(x_i), y_i)] } - \exarg{(x,y)\sim\cD}{{ \nabla^2[\ell(f_{{W}}(x), y)] }}} \tag{by Cauchy-Schwarz} \\
        \le& \frac{\sigma^2\sqrt p \cdot C_2\sqrt{\log(C_3 n/\delta)}}{2\beta \sqrt n}.\label{eq_con_err_hess}
    \end{align}
    In particular, the second step also uses the fact that the Hessian is a symmetric $p$ by $p$ matrix.
    As for equation \eqref{eq_chernoff_trace_hess}, we have that
    \begin{align*}
        \frac{1 - \beta}{2\beta} \exarg{(x, y)\sim\cD}{\inner{\Sigma}{\nabla^2 \ell(f_W(x), y)}}
        \le \frac{1 - \beta}{2\beta} \inner{\Sigma}{\exarg{(x,y)\sim\cD}{\nabla_+^2 \ell(f_W(x), y)}},
    \end{align*}
    Combined with equation \eqref{eq_con_err_hess}, we have shown that
    \begin{align}
        E_1 \le \frac{\sigma^2 \sqrt p \cdot C_2 \sqrt{\log(C_3 n /\delta)}}{2\beta\sqrt n} + \frac{1-\beta}{2\beta} \inner{\Sigma}{\exarg{(x, y)\sim\cD}{\nabla_+^2\ell(f_W(x), y)}}. \label{eq_E1_err}
    \end{align}
    Next, for $E_2$, we use the PAC-Bayes bound of Theorem \ref{lemma_pac}.
    In particular, we set the prior distribution $\cP$ as the distribution of $U$ and the posterior distribution $\cQ$ as the distribution of $W + U$.
    Thus,
    \begin{align}
        E_2 \le \frac{C \big(KL(\cQ || \cP) + \log(\delta^{-1}) \big)}{2\beta(1 - \beta) n} \le \frac{C \Big(\frac 1 2 { {W}^{\top} \Sigma^{-1} W} + \log(\delta^{-1}) \Big)} {2\beta (1 - \beta) n}. \label{eq_combine_1_hess}
    \end{align}
    Combining equations \eqref{eq_combine_3_hess}, \eqref{eq_E1_err}, \eqref{eq_combine_1_hess}, we claim that with probability at least $1 - 2\delta$, we must have:
    \begin{align}
        L({W}) - \frac{1}{\beta}\hat{L}({W}) \leq&  
        \frac{\sigma^2\sqrt p \cdot C_2 \sqrt{\log(C_3 n/\delta)}}{2 \beta\sqrt{n}}
        + \frac{1 + \beta}{\beta} C_1{\bigtr{\Sigma}}^{3/2} + \frac{C \log\frac{1}{\delta}}{2\beta(1-\beta)n} \label{eq_gap1} \\
        & + \frac{C W^{\top} \Sigma^{-1} W} {4\beta(1 - \beta) n} + \frac{ 1- \beta}{2\beta} \inner{\Sigma}{\bH_W}, \label{eq_balance}
    \end{align}
    where \[ \bH_W = \exarg{(x,y)\sim\cD}{ [\nabla^2 \ell(f_W(x), y)]^+ }. \]
    We next choose $\Sigma$ and $\beta \in (0,1)$ to minimize the above bound.
    In particular, we set
    \begin{align}
        \Sigma = \sqrt{\frac{C}{2(1-\beta)^2 n \bignorm{W}^2}} {\bH_W}^{-\frac 1 2} WW^{\top} \label{eq_equal_hess}
    \end{align}
    so that the two terms in equation \eqref{eq_balance} are equal to each other:
    \begin{align*}
        \frac{1-\beta}{\beta} \cdot \sqrt{\frac C {2(1-\beta)^2 n\bignorm{W}^2}} \inner{{\bH_W}^{\frac 1 2} W} {W}
        &\le \frac{1}{\beta}\sqrt{\frac C {2n \bignorm{W}^2}} \bignorm{{\bH_W}^{\frac 1 2} W} \bignorm{W} \\
        &= \frac 1 {\beta} \sqrt{\frac C {2n}} \bignorm{W^{\top} \bH_W W} \\
        &\le \frac {1} {\beta} \sqrt{\frac {C \cH} {2n}}.
    \end{align*}
    
    By plugging in $\sigma$ to equation \eqref{eq_gap1} and re-arranging terms, the gap between $L(W)$ and ${\beta}^{-1} {\hat{L}(W)}$ can be bounded as:
    \begin{align*}
         L(W) - \frac{1}{\beta} \hat{L}(W) 
        \le \frac{1}{\beta} \sqrt{\frac{C\cH }{n}} 
        + \frac{C_2\sqrt{2 p \log (C_3 n/\delta)}}{2\beta \sqrt n} \sigma^2 + \frac{1 + \beta}{\beta} C_1 {\bigtr{\Sigma}}^{3/2} + \frac{C}{2\beta(1-\beta)n}\log \frac{1}{\delta}.
    \end{align*}
    Let $\beta = 1 / (1 + \epsilon)$ and so that $\epsilon = (1 - \beta)/\beta$.
    We find that
    \begin{align*}
        L(W) &\le  (1 + \epsilon) \hat{L}(W) + (1 + \epsilon) \sqrt{\frac{C \cH}{n}} + \xi, \text{ where } \\
        &\quad\xi = \frac{C_2\sqrt{2 p \log (C_3 n/\delta)}}{2\beta \sqrt n}  \sigma^2 + \Bigbrace{1 + \frac{1}{\beta}} C_1 {\bigtr{\Sigma}^{3/2}} + \frac{C}{2\beta(1-\beta)n}\log \frac{1}{\delta}.
    \end{align*}
    Notice that $\xi$ is of order 
    \[ O(n^{-\frac 3 4} + n^{-\frac 3 4} + \log(\delta^{-1}) n^{-1}) = O(\log(\delta^{-1})  n^{-\frac 3 4}). \]
    Therefore, we have shown that with probability at least $1 - \delta$,
    \begin{align}
        L(f_W) \le (1 + \epsilon) \hat L(f_W) + (1 + \epsilon) \sqrt{\frac{C \cH}{ n}} + O(n^{-3/4}).
    \end{align}
    The proof is now complete.
\end{proof}

\subsubsection{Proof of Lemma \ref{lemma_union_bound}}\label{proof_uniform}

In this section, we provide the proof of Lemma \ref{lemma_union_bound}, which shows the uniform convergence of the loss Hessian.

\begin{proof}[Proof of Lemma \ref{lemma_union_bound}]
    Let $C$, $\epsilon > 0$, and let \[ S = \{W\in\real^{p}: \bignorms{W}\leq C\}. \]
    There exists an $\epsilon$-cover of $S$ with respect to the $\ell_2$-norm with at most $\max\Big(\big(\frac{3C}{\epsilon}\big)^p,1\Big)$ elements; see, e.g., Example 5.8 \citep{wainwright2019high}.
    Let $T\subseteq S$ denote this cover.
    Recall that the Hessian $\nabla^2[\ell(f_W(x), y)]$ is  $C_1$-Lipschitz for all $(W+U) \in S, W\in S$. Then we have
    \begin{align*}
        \bignormFro{\nabla^2[\ell(f_{W+U}(x),y)] - \nabla^2[\ell(f_W(x),y)]}\leq C_1 \bignorms{U}.
    \end{align*}
    For $\delta,\epsilon > 0$, 
    define the event 
    \begin{align*}
        E=\Big\{\forall~ W\in T, \bignormFro{\frac{1}{n}\sum_{i=1}^n \nabla^2[\ell(f_{W}(x_i), y_i)] - \exarg{(x,y)\sim\cD}{ \nabla^2[\ell(f_W(x), y)]}}\leq\delta\Big\}.
    \end{align*}
    By the matrix Bernstein inequality \cite{vershynin2018high}, we have
    \begin{align} \Pr[E] \geq 1 - 4\cdot |\cN|\cdot p\cdot \exp\left(-\frac {n\delta^2} {2\alpha^2}\right). \end{align}
    Next, for any $W\in S$, we can pick some $W + U\in T$ such that $\bignorms{U}\leq \epsilon$. We have
    \begin{align*}
        &\bignormFro{\exarg{(x,y)\sim\cD}{\nabla^2[\ell(f_{W+U}(x), y)]} - \exarg{(x,y)\sim\cD}{\nabla^2[\ell(f_W(x), y)]}}\leq C_1\bignorms{U}\leq C_1\epsilon\\
        &\bignormFro{\frac{1}{n}\sum_{j=1}^n\nabla^2[\ell(f_{W+U}(x_j), y_j)] -  \frac{1}{n}\sum_{j=1}^n\nabla^2[\ell(f_W(x_j),y_j)]}\leq C_1\bignorms{U}\leq C_1\epsilon.
    \end{align*}
    Therefore, for any $W\in S$, we obtain:
    \begin{align*}
        \bignormFro{\frac{1}{n}\sum_{j=1}^n\nabla^2[\ell(f_{W}(x_j), y_j)] - \exarg{(x,y)\sim\cD}{ \nabla^2[\ell(f_W(x), y)]}}\leq 2C_1\epsilon + \delta.
    \end{align*}
    Set $\epsilon = \delta/(2C_1)$ so that on the event $E$,
    \begin{align*}
        \bignormFro{\frac{1}{n}\sum_{j=1}^n\nabla^2[\ell(f_{W}(x_j), y_j)] - \exarg{(x,y)\sim\cD}{ \nabla^2[\ell(f_W(x), y)]}}\leq 2\delta.
    \end{align*}
    The event $E$ happens with a probability of at least:
    \begin{align*}
        1 - 4|T|p \cdot\exp\left(-\frac{ n\delta^2} {2\alpha^2}\right) = 1 - 4p\cdot \exp\left(\log |T| - \frac{n\delta^2}{2\alpha^2}\right).
    \end{align*}
    Now, we have $\log |T| \leq p\log(3B/\epsilon) = p\log (6C C_1 /\delta)$.
    If we set
    \begin{align} \delta = \sqrt{\frac{4p\alpha^2\log(3\tau C C_1 n/\alpha)}{n}},
    \end{align}
    so that 
    $\log(3\tau C C_1 n/\alpha) \geq 1$ (because $n\geq \frac{e\alpha}{3C_1}$ and $\tau\geq 1$), then we find
    \begin{align*}
         p\log(6C C_1/\delta) - n\delta^2/(2\alpha^2)
        =&p\log\fullbrace{\frac{6C C_1\sqrt{n}}{\sqrt{4p\alpha^2\log(3\tau C C_1 n/\alpha)}}} - 2p\log\fullbrace{3\tau C C_1 n/\alpha} \\
        =&p\log\fullbrace{\frac{3C C_1\sqrt{n}}{\alpha\sqrt{p\log(3\tau C C_1 n/\alpha)}}} - 2p\log\fullbrace{3\tau C C_1 n/\alpha} \\
        \leq& p\log\fullbrace{3\tau C C_1 n/\alpha} - 2p\log\fullbrace{3\tau C C_1 n/\alpha} \tag{$\tau\ge 1,\log(3\tau C C_1 n/\alpha)\geq 1$}\\
        =& -p\log\fullbrace{3\tau C C_1 n/\alpha} \leq -p\log(e\tau) \tag{$3C C_1 n/ \alpha\geq e$}.
    \end{align*}
    Therefore, with a probability at least
    \begin{align} 1 - 4|\cN|p\cdot\exp(-n\delta^2/(2\alpha^2))\geq 1 - 4p (e\tau)^{-p}, \end{align}
   we have
    \begin{align*}
        \bignormFro{\frac{1}{n}\sum_{j=1}^n \nabla^2[\ell(f_{W}(x_j),y_j)] - \exarg{(x,y)\sim\cD}{ \nabla^2[\ell(f_W(x), y)]}}\leq \sqrt{\frac{16p\alpha^2\log(3\tau C C_1 n/\alpha)}{n}}.
    \end{align*}
    Denote $\delta' = 4p(e\tau)^{-p}$, $C_2 = 4\alpha\sqrt{p}$, and $C_3 = 12 p C C_1 /(e\alpha)$. 
    With probability at least $1 - \delta'$, we have
    \begin{align*}
        \bignormFro{\frac{1}{n}\sum_{i=1}^n \nabla^2[\ell(f_{W}(x_i), y_i)] - \exarg{(x,y)\sim\cD}{ \nabla^2[\ell(f_W(x), y)]}}\leq C_2\sqrt{\frac{\log(C_3 n/\delta')}{n}}.
    \end{align*}
    This completes the proof of Lemma \ref{lemma_union_bound}.
\end{proof}

\paragraph{Discussions.} 
The minimax optimal sample complexity to find an $\epsilon$-optimal $Q$-function in tabular $\gamma$-discounted MDPs is known to be $\tilde{O}\left({SA}{(1-\gamma)^{-3} \epsilon^{-2}}\right)$~\cite{chi2025statistical}. This rate is achieved by model-based algorithms. Standard synchronous $Q$-learning, however, is sub-optimal, requiring $\tilde{O}\left({SA}{(1-\gamma)^{-4} \epsilon^{-2}}\right)$ samples. This gap in horizon dependency was later closed by variance reduction techniques. For instance, a variance-reduced $Q$-learning algorithm~\cite{wainwright2019variance} achieves the minimax optimal rate, matching the lower bound up to logarithmic factors. See a recent monograph for extensive references \cite{foster2023foundations}.
It is an interesting open question to study the sample complexity of $Q$-learning for multi-objective reinforcement learning.
Another interesting question is to explore gradient estimation~\cite{li2025efficient, zhang2025linear} for designing ensemble methods in meta-RL.
These are left for future work.

\section{Omitted Experimentals}

\textbf{Environments.} 
We summarize the RL environments: MT10, CartPole, Highway, LunarLander in Table~\ref{table_envrionment_summarization}.
We adopt the Gymnasium library~\cite {towers2024gymnasium} for environment implementation.

\begin{table}[t]
\caption{A summary of RL environments tested in our experiments.}\label{table_envrionment_summarization}
{
\begin{tabular}{p{2cm}p{3.0cm}p{3.0cm}p{3.0cm}p{3.0cm}}
    \toprule
    \textbf{Env.} & \textbf{Goal} & \textbf{State} & \textbf{Action} & \textbf{Reward} \\ \hline
    MT10 & Execute robotic manipulation tasks & $\mathbb{R}^{39}$, representing the state of the robot and the target objective. & 4 continuous actions: controlling end-effector movement and gripper actuation. & A sparse reward for moving the objective to its goal position. \\ \hline
    CartPole & Balance a pole attached to a cart  & $\mathbb{R}^4$, representing the cart's position, velocity, the pole's angle, and angular velocity.  & 2 discrete actions: applying a push to the cart (left or right). & A reward of +1 for each timestep the pole remains upright. \\ \hline
    Highway  & Avoid collisions while driving on a highway & $\mathbb{R}^{5\times 5}$, representing the kinematic states of the ego vehicle and four nearby vehicles. & 5 discrete actions: change lane left/right, maintain lane, accelerate, decelerate. & A dense reward for maintaining high speed, with a large penalty for collisions. \\ \hline
    LunarLander & Land a spacecraft safely on the ground  & $\mathbb{R}^8$, representing the lander's coordinates, velocities, angle, and leg contact status.  & 4 discrete actions: control the main engine and side thrusters. & A shaped reward for approaching the landing pad, plus a large bonus for a soft landing.    \\
    \bottomrule
\end{tabular}}
\end{table}

\paragraph{RL algorithms.}
In the Meta-World experiments, we employ Soft Actor-Critic (SAC)~\cite{haarnoja2018soft} as the policy gradient algorithm.
SAC is an off-policy, actor-critic deep reinforcement learning algorithm based on the maximum entropy reinforcement learning framework. The core idea is to encourage exploration by maximizing a trade-off between the expected return and the policy's entropy.

The SAC algorithm learns three functions: A policy $\pi_\theta$ and two critic functions $Q_{\psi_1}$ and $Q_{\psi_2}$. 
We train the critic networks by minimizing the mean squared Bellman error:
\begin{align}
    L(\psi_i) = \hat{\mathbb{E}}_{(s_t, a_t, r_t, s_{t+1}, d_t) \sim \mathcal{D}} \left[ \frac{1}{2} \left( Q_{\psi_i}(s_t, a_t) - y(r_t, s_{t+1}, d_t) \right)^2 \right].
\end{align}
The target value $y$ is shared between both critics and is computed as:
\begin{align}
    y(r_t, s_{t+1}, d_t) = r_t + \gamma (1 - d_t) \left( \min_{j=1,2} Q_{\psi_{j, \text{target}}}(s_{t+1}, a_{t+1}) - \alpha \log \pi_\theta(a_{t+1} \mid s_{t+1}) \right), 
\end{align}
with $a_{t+1} \sim \pi_\theta(\cdot \mid s_{t+1})$.
The target network parameters $\psi_{j, \text{target}}$ is updated by \[ \psi_{j, \text{target}}\leftarrow\tau\psi_j+(1-\tau)\psi_{j, \text{target}}. \]
Next, the actor parameters $\theta$ are trained to maximize the policy's expected return and entropy:
\begin{align}
    L_{\text{SAC}}(\theta) = \mathbb{E}_{\hat{s}_t \sim \mathcal{D}} \left[ 
    \mathbb{E}_{a_t \sim \pi_\theta(\cdot \mid s_t)} \left[
        \alpha \log \pi_\theta(a_t \mid s_t) - \min_{j=1,2} Q_{\psi_j}(s_t, a_t)
    \right]
\right]
\end{align}

In the control environments, we employ Proximal Policy Optimization (PPO)~\cite{schulman2017proximal} as our policy gradient algorithm.
PPO follows the design of the actor-critic algorithm~\cite {konda1999actor}, where the core optimization objective is the clipped surrogate policy loss.
This is combined with a value function loss and an entropy as the optimization objective.

PPO involves tackling the following objective:
\begin{align}\label{eq_ppo_loss}
    L_{\text{PPO}}(\theta) = \mathbb{E}_{t} \left[ \min \left(r_t(\theta)\hat{A}_t, \text{clip}(r_t(\theta), 1 - \epsilon, 1 + \epsilon) \hat{A}_t \right)
    \right] - c_1 \mathbb{E}_{t} \left[ (V_\theta(s_t) - \hat{V}_t)^2 \right]
    + c_2 \mathbb{E}_{t} \left[ \mathcal{H}[\pi_\theta](s_t) \right],
\end{align}
where \[ r_t(\theta_t)=\frac{\pi_{\theta_t}(a_t|s_t)}{\pi_{\theta_{t-1}}(a_t|s_t)} \] is the probability ratio between the new policy and the previous policy, $\hat{A}_t$ is the advantage estimation computed from the value function $V_\theta(s_t)$, and $\mathcal{H}[\pi_\theta](s_t)$ is the entropy of the policy.

In the meta-RL setting, we use the MAML algorithm~\cite{finn2017model}, which consists of a two-step optimization process.
In the inner loop adaptation, MAML computes an adapted policy for each source task $T_i$ from the trajectory $\mathcal{D}_\theta^i$ sampled by the interaction with the meta policy $\theta$ and the corresponding environment: 
\begin{align}
    \theta_i'=\theta-\alpha\nabla_{\theta}L_{T_i}(\pi_{\theta}).
\end{align}

In the outer loop meta-update, MAML then aggregates the data from the trajectory sampled by the interaction with the task-specific policy $\phi_1^i$ and the corresponding environment:
\begin{align}
    \theta \leftarrow \theta + \beta\cdot \nabla_{\theta}\sum_{T_i\in\mathcal{T}} L_{T_i}(\pi_{\theta}).
\end{align}

\paragraph{Baseline descriptions.} 
PaCo (Parameter Compositional MTRL) introduces a highly flexible framework where task-specific parameters are explicitly composed rather than separated \cite{sun2022paco}. In PaCo, the parameter vector for a given task %
is constructed as a linear combination of a shared, task-agnostic basis of parameter vectors and a learned, task-specific compositional vector. %
Other notable architectures, such as Soft Modularization (SM) and those based on Mixture-of-Experts (MoE), explore ways to dynamically route information and learn task-specific components in a unified network \cite{yang2020multi, sun2022paco}.

Simultaneous optimization of a shared set of parameters under competing demands of various tasks frequently leads to destructive interference, where joint training results in worse performance on a given task than training it alone \cite{yu2020pcgrad, teh2017distral}.
The gradients from different tasks can point in conflicting directions, which destabilizes learning \cite{yu2020pcgrad, liu2021cagrad}.

\paragraph{Meta-reinforcement learning.}
Meta reinforcement learning may be viewed as a specific scenario of meta learning.
MAML~\cite{finn2017model} is a foundational work that introduces a bi-level optimization structure. In the inner loop, the policy is adapted to specific source tasks. In the outer loop, the initial meta-policies are updated by differentiating with respect to inner-loop adaptation, optimizing for the expected performance of the adapted policy across a distribution of source tasks.
Building on this, ProMP~\cite{rothfusspromp} introduces a low variance curvature estimator to stabilize training by assigning weight to each sample in trajectories.
Tr-MAML~\cite{collins2020task} focuses on the most challenging task and introduces a min-max framework to minimize the maximum loss in the set.

\paragraph{Implementation details.}
In the Meta-World benchmark, we use training steps of $10^7$, a buffer size of $10^6$, a batch size of $1280$, and a learning rate of $3\times10^{-4}$ with the Adam optimizer.

In the CartPole, Highway, and LunarLander environments, we use the following settings: $100$ meta-training steps, $2048$ training steps for each inner loop policy update, and a batch size of $64$.

\begin{figure}[t!]
    \centering
    \begin{subfigure}[b]{0.45\textwidth}
        \centering
        \includegraphics[width=0.99\textwidth]{./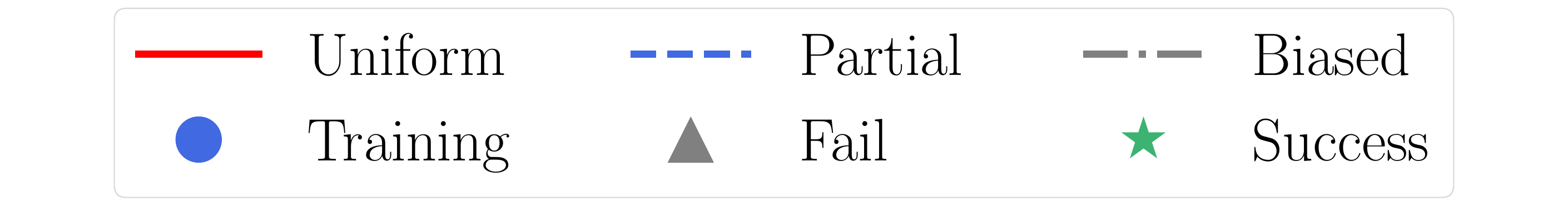}
    \end{subfigure}
    
    \begin{subfigure}[b]{0.23\textwidth}
        \centering
        \includegraphics[width=0.99\textwidth]{./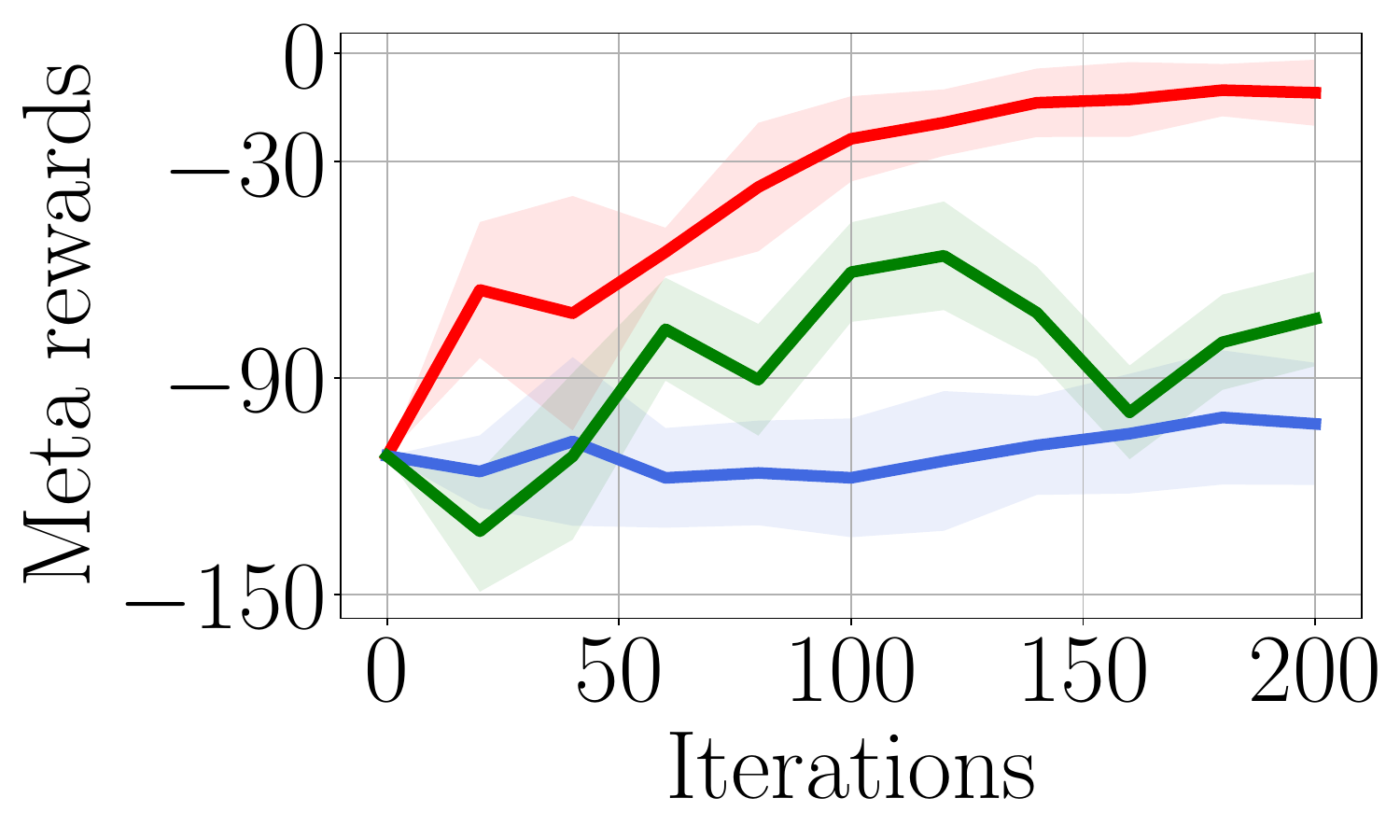}
        \caption{Meta training curve}\label{fig_navigation_curve}
    \end{subfigure}\hfill
    \begin{subfigure}[b]{0.23\textwidth}
        \centering
        \includegraphics[width=0.99\textwidth]{./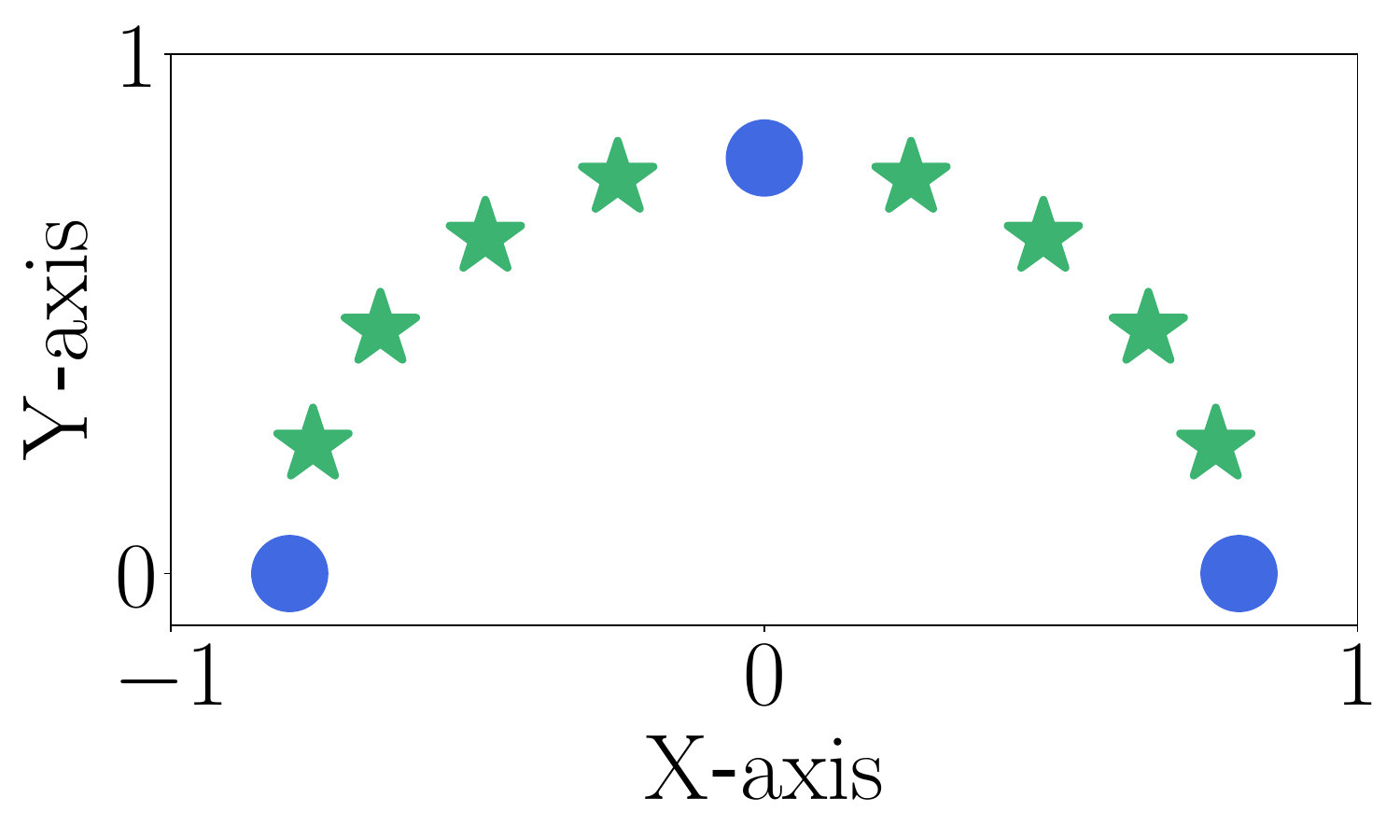}
        \caption{Uniform sampling}\label{fig_navigation_uniform}
    \end{subfigure}\hfill 
    \begin{subfigure}[b]{0.23\textwidth}
        \centering
        \includegraphics[width=0.99\textwidth]{./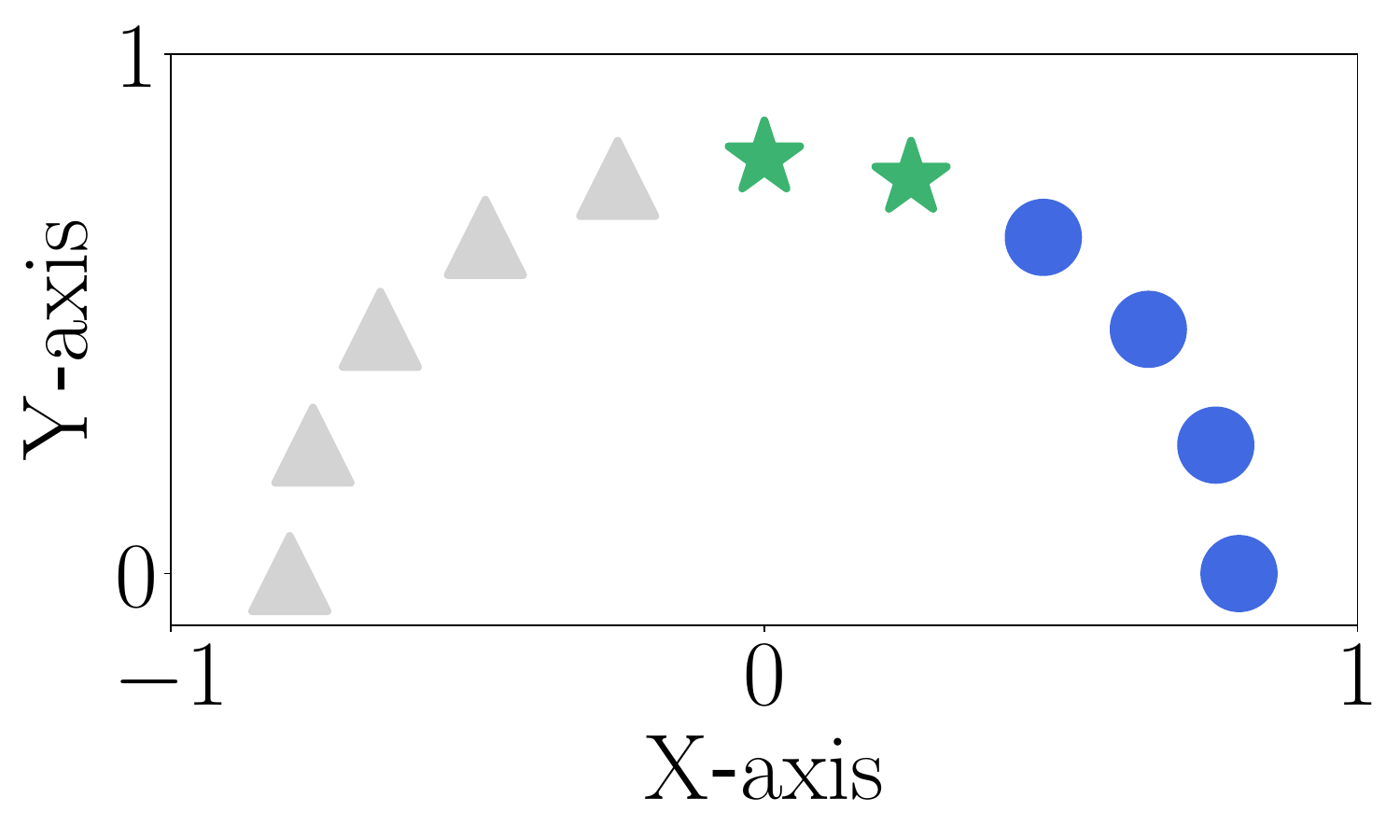}
        \caption{Partial sampling}\label{fig_navigation_partial}
    \end{subfigure}\hfill
    \begin{subfigure}[b]{0.23\textwidth}
        \centering
        \includegraphics[width=0.99\textwidth]{./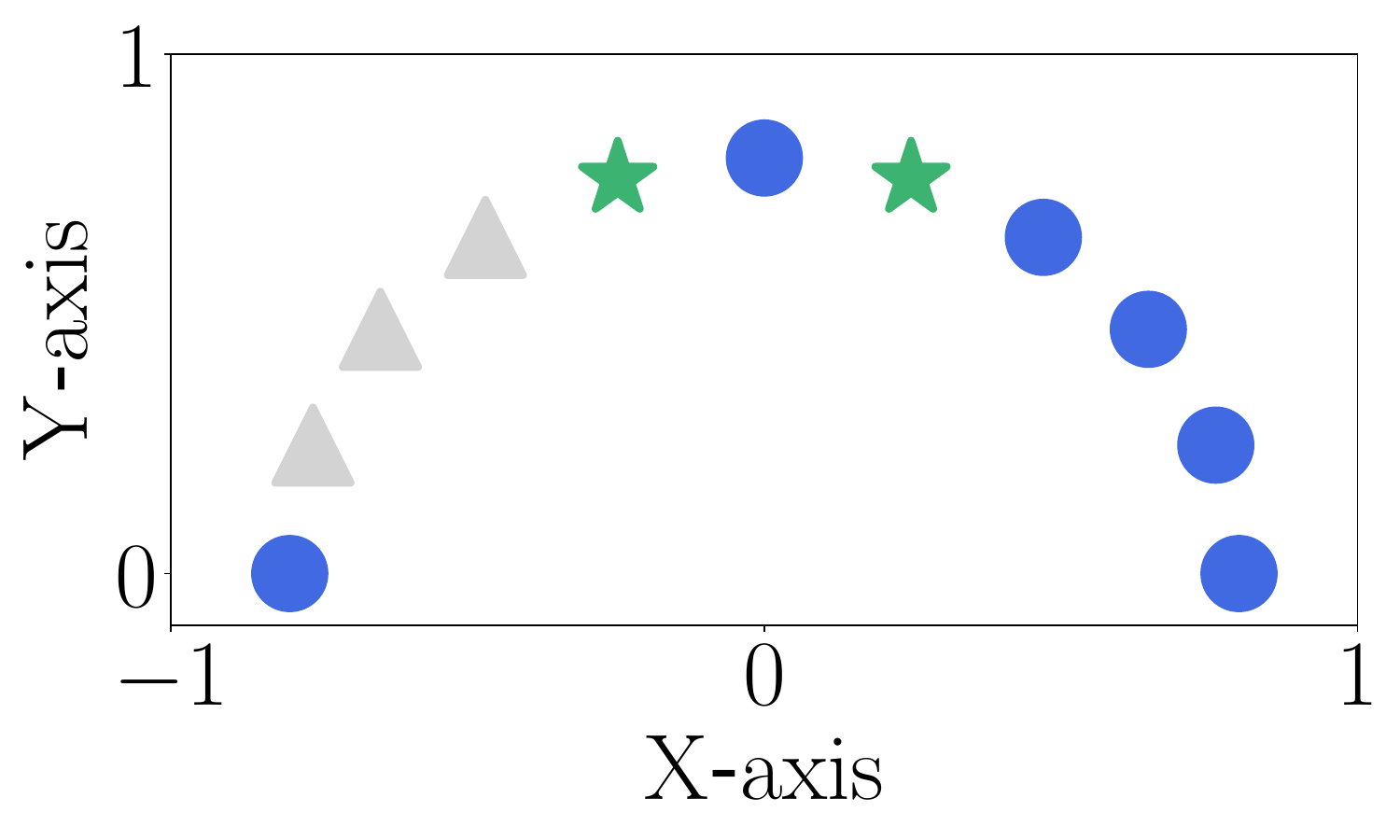}
        \caption{Biased sampling}\label{fig_navigation_biased}
    \end{subfigure}\hfill 
    \caption{We illustrate the meta-training curve and the corresponding meta task distribution of the navigation environment. The agent is meta-trained to navigate a training set of goals, and then tested on a distinct set of unseen test goals with a few adaptation steps.}\label{fig_navigation_results}
\end{figure}

\paragraph{Additional experiments.}
We find that the shifted distribution of the source task $\mathcal{T}$ affects the generalization on the unseen task.
We conduct a straightforward experiment in a 2D navigation environment to empirically investigate the influence of the source task distribution on meta reinforcement learning. 
In this environment, the agent starts at the origin $(0,0)$ and navigates towards a goal within a $[-1,1]\times[-1,1]$ plane. All goals are located on a circle centered at the origin with a fixed radius of $0.9$. An episode is considered a success if the agent's final distance to the goal is within a threshold of $0.1$. The agent is guided by a reward function defined as the negative Euclidean distance to the goal, with an additional bonus upon success.

For the experiment, we maintain a uniform distribution for the target tasks, where goals can appear anywhere on the circle. We then analyze the agent's adaptation performance after being meta-trained on source tasks drawn from three different distributions. The training curves are illustrated in Figure \ref{fig_navigation_curve}. We find that meta-training under uniform task distribution consistently outperforms that under partial and biased task distribution.
In detail, we find that:
\begin{enumerate}
    \item In Figure~\ref{fig_navigation_uniform}, when the source task distribution is uniform and covers the four directions of the target space, the agent can easily adapt to any target task.
    \item In Figure~\ref{fig_navigation_partial}, when the source tasks do not cover the target tasks, the policy overfits to the source distribution and fails to generalize.
    \item In Figure~\ref{fig_navigation_biased}, when the source distribution covers the target space but is heavily biased, the policy struggles to adapt to target tasks in the sparsely represented regions.
\end{enumerate}

\end{document}